\newtheorem{prop}{Proposition}
\newtheorem{cor}{Corollary}
\newcommand{\ptrain}{p_{\text{train}}}
\newcommand{\ptest}{p_{\text{test}}}
\newcommand{\pjoint}{\mathcal{P}^{\adro}_{\x,\y}}
\newcommand{\pcov}{\mathcal{P}^{\adro}_{\x}}
\newcommand{\pconf}{\mathcal{P}^{\adro}_{\x,\conf}}
\newcommand{\ppot}{\mathcal{P}}
\newcommand{\x}{x}
\newcommand{\y}{y}
\newcommand{\conf}{c}  
\newcommand{\atrue}{\alpha^*}
\newcommand{\adro}{\alpha}
\newcommand{\defeq}{:=}
\newcommand{\confcondrisk}{\E[\loss(\param; (\x,\y)) \mid \x, \conf]}
\newcommand{\mc}{\mathcal}
\newcommand{\param}{\theta}
\newcommand{\loss}{\ell}
\newcommand{\E}{\mathbb{E}}
\newcommand{\EE}{\mathbb{E}}
\newcommand{\hingeBig}[1]{\Big({#1}\Big)_+}
\newcommand{\hinge}[1]{({#1})_+}
\newcommand{\norm}[1]{\left\|{#1}\right\|} 
\newcommand{\ol}{\overline}
\newcommand{\olmc}[1]{\overline{\mathcal{#1}}}
\icmltitlerunning{Robustness to Spurious Correlations via Human Annotations}
\begin{document}
\twocolumn[
\icmltitle{Robustness to Spurious Correlations via Human Annotations}

\begin{icmlauthorlist}
\icmlauthor{Megha Srivastava}{to}
\icmlauthor{Tatsunori Hashimoto}{to}
\icmlauthor{Percy Liang}{to}
\end{icmlauthorlist}

\icmlaffiliation{to}{Computer Science Department, Stanford University}
\icmlcorrespondingauthor{Megha Srivastava}{megha@cs.stanford.edu}
\icmlkeywords{Machine Learning, ICML}

\vskip 0.3in
]

\printAffiliationsAndNotice{} 

\begin{abstract}
The reliability of machine learning systems critically assumes that the associations between features and labels remain similar between training and test distributions. However, \textit{unmeasured variables}, such as confounders, break this assumption---useful correlations between features and labels at training time can become useless or even harmful at test time. For example, high obesity is generally predictive for heart disease, but this relation may not hold for smokers who generally have lower rates of obesity and higher rates of heart disease. We present a framework for making models robust to spurious correlations by leveraging humans' common sense knowledge of causality. Specifically, we use human annotation to augment each training example with a potential unmeasured variable (i.e. an underweight patient with heart disease may be a smoker), reducing the problem to a covariate shift problem. We then introduce a new distributionally robust optimization objective over unmeasured variables (UV-DRO) to control the worst-case loss over possible test-time shifts. Empirically, we show improvements of 5--10\% on a digit recognition task confounded by rotation, and 1.5--5\% on the task of analyzing NYPD Police Stops confounded by location.

\end{abstract}

\section{Introduction} \label{intro}
The increasing use of machine learning in socioeconomic problems as well as high-stakes decision-making emphasizes the importance of designing models that can perform well over a wide range of users and conditions \citep{barocas2016,blodgett2016,hovy2015,tatman2017}. In some cases, the set of target users and test distributions are known---for example, research in the fair machine learning community has largely been motivated by case studies such as face-recognition systems performing poorly on populations with dark skin color \citep{buolamwini2018gender}. However, there exist many more distributional shifts that the designer of a machine learning system may have been unaware of when collecting data, or may be impossible to measure. Existing approaches such as distributional robustness \cite{bental2013robust, lam2015quantifying} and domain adaptation \cite{mansour2009domain, blitzer2011domain,  gong2013reshaping} require either a priori specifying the distribution shifts, or sampling from the target test distributions. How can we ensure that a model performs reliably at test time without explicitly specifying the domain shifts? 

Existing research on human-in-the-loop systems and crowdsourcing have shown that humans have a rich understanding of the plausible domain shifts in our world, as well as how these changes affect the prediction task \citep{talmor2019commonsenseqa, mostafazadeh2016corpus}. Can we leverage humans' strong prior knowledge to understand the possible distribution shifts for a specific machine learning task? The key idea of our paper is to  use human commonsense reasoning as a source of information about potential test-time shifts, and effectively use this information to learn robust models.  

To see how human annotations may help, consider the task of creating large-scale diagnostic models for medicine. Although these models are trained on large amounts of data, they almost invariably lack features for important risk factors that were either hidden for legal reasons (e.g. health insurance providers cannot collect genetic information), privacy concerns (e.g. collection of ethnic information \citep{michele2004eliminating}), or simply unobserved (e.g. drug use). For example, a diagnostic model for heart disease trained on the general population may learn to predict heart disease based upon obesity. However, when used in a drug rehabilitation facility with former smokers, this model may perform poorly, as smokers are often underweight and have high heart disease risk \citep{jarvik1991nicotine}. In this case, smoking is an unmeasured confounder which degrades the model's robustness. A human expert could help with such confounded shifts by annotating the data and identifying that examples with low obesity but significant heart problems may be due to smoking. We would then be able to train our model to be robust to distribution shifts over these unmeasured variables (i.e. if our test set consists primarily of smokers). 

The setting we consider is a prediction task where given features ($\x$) and labels ($\y$) from a training distribution, our goal is to perform well at predicting $y$ given $x$ on an a priori unknown test distribution. To make this task possible, we hypothesize that the distribution shift occurs solely over the features $x$ and a set of unmeasured variables $c$---which can encode obvious confounding factors such as the location of the collected data, as well as more complex factors such as time or demographic information of individuals. Although this assumption reduces the problem to the well-studied covariate shift case, we cannot apply any of these algorithms directly as $c$ is unobserved.

The key insight of our work is that if we design our model to only depend on the features $x$ (and not the unmeasured variables $\conf$), we \emph{do not} need to recover the true value of $\conf$. Instead, our procedure only requires samples from the conditional distribution $\conf \mid \x, \y$ during training. This property allows us to augment the training data with $\conf$ using crowdsourcing, and leverage human commonsense reasoning to define the potential test-time shifts over unmeasured variables. We will first augment the dataset with approximate $\ol{\conf}$ by asking  humans for natural language descriptions of additional reasons why features $x$ would lead to a label $y$. Eliciting $\conf$ in natural language means that we do not have to specify the set of potential unmeasured variables, and allows annotators to easily express a diverse and rich class of $\conf$'s. We then use these annotations as a way to learn a model that predicts labels $y$ given only the observable features $x$ under potential distribution shifts on $(\x,\conf)$.

\section{Problem Statement} \label{setup}
Formally, consider a prediction problem where we observe features $\x$, and predict a label $\y$. A model $\param$ suffers loss $\ell((\x,\y),\param)$, and we train this model using samples $(\x,\y) \sim \ptrain$. While standard practice minimizes risk with respect to the training distribution,
\begin{equation}
  \EE_{\ptrain}[\ell((\x,\y);\param)],\label{eq:erm}
\end{equation}
this approach can fail when $\ptest \neq \ptrain$, as is common in real-world tasks that involve domain adaptation. For example, the training distribution may be affected by annotation biases \cite{geva2019annotator} or underrepresentation of minority groups \cite{oren2019drolm,hashimoto2018repeated} compared to the test distribution. In this situation we would like the model to perform well over the set of potential test distributions $\mathcal{P}$ by minimizing
\begin{equation}
  \mathcal{R}(\param, \mc{P}) := \sup_{P\in\mc{P}}\EE_{P}[\ell((\x,\y);\param)].\label{eq:minimax}
  \end{equation}
The minimax objective captures many existing settings of interest, such as domain adaptation (where $\ppot$ is a small-number of target domains \cite{mansour2009domain}), uniform subgroup guarantees (where $\ppot$ are minority subgroups of the training distribution \cite{hashimoto2018repeated, duchi2018learning}), and distributionally robust optimization (where $\ppot$ is a divergence ball centered around the training distribution \cite{bental2013robust}).
We will focus on \emph{uniform subgroup guarantees} which define the set of potential test distributions as subpopulations with size at least $\atrue$,
\begin{multline}$$ \pjoint := \{\mathcal{Q}_0: \ptrain(\x,\y) = \atrue\mathcal{Q}_0(\x,\y)\\
  +(1-\atrue)\mathcal{Q}_1(\x,\y) \textrm{ for some }\mathcal{Q}_1\textrm{ with } \atrue > \adro\}$$.\label{eq:subpop}
\end{multline}
Prior work has shown that such shifts over groups can be used to capture a wide range of test time distributions  including label shifts \cite{hu2018does}, topics within a corpus \cite{oren2019drolm} , and demographic groups \cite{hashimoto2018repeated}. However, the set $\pjoint$ includes \emph{all} possible subpopulations which can be too pessimistic since this allows the conditional distribution $\ptest(\y\mid \x)$ to change arbitrarily and adversarially subject to the $\adro$-overlap constraint. For example, minimizing $\mc{R}(\param,\pjoint)$ with the zero-one loss results in a degenerate worst-case group that simply groups all the misclassified examples (up to a $\adro$ fraction) into an adversarial worst-case group \cite{hu2018does}. This drawback will lead us to consider restricted forms of the subpopulation guarantee in $\pjoint$.

A common approach for avoiding such degeneracy is to make a \emph{covariate shift} assumption \cite{shimodaira2000improving,quinonero2009dataset} which asserts that
\begin{equation}\label{eq:covshift}
  \ptrain(\y \mid \x) = \ptest(\y \mid \x).
  \end{equation}
  This resolves the earlier issues by restricting the subpopulation shift~\eqref{eq:subpop} to the covariate $x$. We will define this uncertainty set as $\pcov$ analogously to $\pjoint$.
    One particularly appealing property of covariate shift is that the Bayes-optimal classifier on $\ptrain$ will be Bayes-optimal on any $\ptest\in \pcov$, making it possible to simultaneously perform well on both the average \emph{and} worst case.  Unfortunately, this assumption is usually violated, as many distributional shifts involve unmeasured variables $\conf$ and even if $\y \mid \x, \conf$ remains fixed across train and test, the same may not hold for $\y \mid \x$.

    \subsection{Covariate shifts over unobserved variables}
    Recall our earlier example of a model trained on the general population to predict heart disease ($y$) from features such as obesity ($x$) and tested on recent smokers in a rehabilitation center. The covariate shift assumption does not hold, as the conditional distribution $\y\mid \x$ differs substantially from training to test. This example of \emph{omitted variable bias} arises whenever we fail to account for confounders, mediators, and effect modifiers \citep{angrist2009econometrics,van2015causal}.

    Using a general purpose uncertainty set such as $\pjoint$ to capture these types of shifts would also allow for nearly arbitrary shifts in the predictive distribution and would prevent us from making any predictions at all. However, the situation changes drastically if we observed whether individuals were smokers. If smoking is the only unmeasured variable which changes between train and test, $\y\mid \x,\conf$ remains fixed and this allows us to make predictions based only on correlations between $\y$ and $\x$ which remain reliable under distributional shifts on $\conf$. 

    Making this intuition precise, we will require that $c$ make the train and test distributions differ by a covariate shift in $(x,c)$, \begin{equation}\label{eq:covshift}
  \ptrain(\y \mid \x, \conf) = \ptest(\y \mid \x, \conf).
\end{equation}
This criterion (known as exogeneity in \citet{pearl2000causality}) defines our desired set $\conf$; however, this definition neither guarantees the existence of $c$ nor allows us to find a valid $\conf$ for a given generative mechanism. We will now show how to identify valid unmeasured variables $\conf$ under a given graphical model. 

\subsection{Conditions given a graphical model } \label{fig:condition}

Suppose that the features and labels $\x, \y$ are associated with a probabilistic graphical model that captures the generative process of $x$ and $y$. Now define a selector variable $z$ which determines whether a sample is included in the train or test distribution, with edges in the graph consistent with the covariate shifts (i.e. $\ptrain(\x,\conf,\y) = p(\x,\conf,\y,z=0)$ and $\ptest(\x,\conf,\y) = p(\x,\conf,\y,z=1)$).

We now state a necessary and sufficient condition for $\conf$ to fulfill~\eqref{eq:covshift}, which is that $\conf$ consists of all variables such that $y$ is d-separated from $z$ by $(x, \conf)$,
\begin{prop}
  A set of variables $\conf$ in a causal graph fulfills the exogeneity condition~\eqref{eq:covshift} whenever $z$ and $y$ are d-separated by $(x,c)$.
\end{prop}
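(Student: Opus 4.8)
The plan is to obtain~\eqref{eq:covshift} as an immediate consequence of the soundness of d-separation (the global Markov property of the augmented graph). Recall from the construction that $\ptrain(\x,\conf,\y) = p(\x,\conf,\y \mid z=0)$ and $\ptest(\x,\conf,\y) = p(\x,\conf,\y \mid z=1)$, so the exogeneity condition~\eqref{eq:covshift} is equivalent to the assertion that the conditional $p(\y \mid \x, \conf, z)$ takes the same value whether $z=0$ or $z=1$ --- i.e., that it does not depend on $z$.

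First I would invoke the global Markov property: since $p(\x,\conf,\y,z)$ factorizes according to the causal graph with the selector node $z$ appended, and $z$ is d-separated from $\y$ by $(\x,\conf)$, the soundness of d-separation yields the conditional independence $\y \perp z \mid (\x,\conf)$ in $p$. Second, I would unpack this independence: for every $(\x,\conf)$ in the support of both $p(\cdot \mid z=0)$ and $p(\cdot \mid z=1)$ we have $p(\y\mid\x,\conf,z=0) = p(\y\mid\x,\conf) = p(\y\mid\x,\conf,z=1)$, which is exactly $\ptrain(\y\mid\x,\conf) = \ptest(\y\mid\x,\conf)$.

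The only real care needed is a support/positivity caveat: the conditionals $p(\cdot\mid z=0)$ and $p(\cdot\mid z=1)$ are only defined where the corresponding distributions place mass, so the statement should be read on the overlap of the train and test supports in $(\x,\conf)$ --- off that overlap it is vacuous. I expect this bookkeeping, rather than any substantive argument, to be the main obstacle, since the core step is just the soundness of d-separation. Finally, I would note that the converse (necessity, which the surrounding text alludes to) is not delivered by this argument: it additionally needs a faithfulness assumption so that conditional independence of $\y$ and $z$ given $(\x,\conf)$ in $p$ forces their d-separation in the graph --- which is presumably why the formal statement is phrased only as ``whenever.''
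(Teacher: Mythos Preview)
Your proposal is correct and follows essentially the same approach as the paper: the paper's proof is a one-liner noting that d-separation implies $p(\y\mid\x,\conf,z)=p(\y\mid\x,\conf)$, which is exactly the global Markov (soundness of d-separation) step you describe, combined with the identification $\ptrain=p(\cdot\mid z=0)$ and $\ptest=p(\cdot\mid z=1)$. Your additional remarks on support/positivity and on faithfulness for the converse are well-taken but go beyond what the paper actually argues.
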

This follows from the definitions of exogeneity and d-separation, which imply $p(\y\mid\x,\conf,z)=p(\y\mid\x,\conf)$. When the graph has a causal interpretation, and $z$ has no children\footnote{This common situation (which is often referred to as sample selection bias) occurs whenever the data already exists, and the training and test distributions are constructed by sampling and selecting examples from a population.} (Figure~\ref{fig:causal}), $z$ acts as a treatment indicator and $c$ is the set of confounders for the effect of $z$ on $y$ conditional $x$. This follows from the fact that d-separation and blocking backdoor paths are equivalent if $z$ has no children \cite{van2013confounder}.
\begin{figure}[ht] 
  \centering
\includegraphics[scale=0.38]{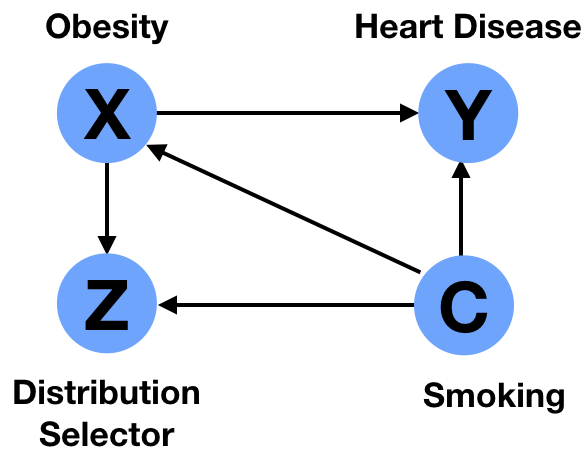}
\caption{Smoking ($c$) d-separates whether an example is in the test or train set ($z$) from the heart disease label ($y$) conditioned on obesity ($x$). In this example $z$ has no children since we select rather than generate examples, and assuming this is the true causal graph, $c$ is a confounder for the effect of $z$ on $y$.} 
  \label{fig:causal}
\end{figure}

\subsection{Sampling Unmeasured Variables} 
Our main challenge is that even if we know unmeasured variables $\conf$ exist, we cannot measure their value on our training data and use $\conf$ to constrain the test time conditionals $\ptest(\y\mid\x)$. However, if we could sample from the distribution $p(\conf \mid \x,\y)$, and combine this with $(\x,\y)$ samples in the training data, we can obtain samples from the full joint distribution of $(\conf,\x,\y)$. This distribution would, in turn, allow us to understand the set of potential $\ptest$ that can occur when we shift the marginal distribution of $(\x, \conf)$.

This key point will allow us to reduce the domain adaptation problem over $(\x,\y)$ to a covariate shift problem over features $(\x,\conf)$. Robustness under covariate shift is still challenging, but we can now apply existing techniques from the covariate shift literature, such as likelihood re-weighting \citep{shimodaira2000improving} or distributionally robust optimization (DRO) \citep{duchi2019distributionally}.

The main insight of this paper is that we can approximate this conditional distribution ($\conf\mid \x,\y$) through \emph{human annotation}: we ask human annotators for ``additional reasons'' why feature $\x$ would lead to  $\y$, and record the natural language explanations as approximate unmeasured variables $\ol{\conf}$. 
A key property of our human annotation procedure is that it is only used to augment \emph{training} data. We cannot sample from this same conditional distribution at \emph{test time}, since we do not have $\y$, and sampling from $\conf\mid \x$ does not provide any additional information beyond $\x$. Our proposed procedure therefore does not rely on any annotation of unmeasured variables $\conf$ at test time. We use shifts over the elicited $\ol{\conf}$ at training time to determine potential shifts in $(x,\ol{c})$, and learn a model that uses only observable features $\x$ that is robust to these shifts.

Conceptually, our approach has three parts: we first elicit $\ol{\conf}\mid \x,\y$ from human annotators over our training data. We then use $(\x,\ol{\conf},\y)$ to define the potential test-time shifts $\ptest(\x,\ol{\conf}) \in \mathcal{P}$. Finally, we learn a model $\param$ that predicts $\x\to\y$ such that $\ell(\x,\y;\param)$ is small over the entirety of $\mathcal{P}$.

\section{Estimation and Optimization}\label{method}
We now discuss the challenges of learning a robust model over unmeasured variables $\conf$. We first define our estimator in terms of the true unmeasured variable $\conf$ and later discuss the challenges associated with using elicited $\ol{\conf}$ in Section \ref{sec:approxconf}.

Given samples from $p(\conf \mid \x, \y)$, we can consider the covariate shift problem over $(\x,\conf)$. In principle, our proposal can utilize any covariate shift approach. However, to illustrate concrete performance improvements for our proposal, we will focus on the uniform subpopulation setting with distributionally robust optimization.

Adapting the earlier covariate subpopulation uncertainty set $\pcov$ to this case, we obtain uncertainty sets defined over $(\x,\conf)$,
 \begin{multline} 
 $$\pconf := \{\mathcal{Q}_0: \ptrain(\x,\conf) = \atrue\mathcal{Q}_0(\x,\conf)\\+(1-\atrue)\mathcal{Q}_1(\x,\conf)  \textrm{ for some }\mathcal{Q}_1\textrm{ with } \atrue > \adro\}$$.
 \end{multline}
 \break
 The resulting distributionally robust objective is now
 \begin{equation}
 \inf_{\param \in \param}\sup_{Q_0 \in \pconf}\EE_{\x,\conf \sim Q_0}[\EE[\ell(\param;(\x, \y)) | \x, \conf]]. 
 \label{eq:uvdro}
 \end{equation}
 We refer to the distributionally robust optimization problem over this uncertainty set ($\mathcal{R}(\param, \pconf)$) as distributionally robust optimization over shifts in unmeasured variables, or (UV-DRO). Although there exist many techniques for efficient distributionally robust optimization \cite{bental2013robust,namkoong2016stochastic,duchi2018learning}, the UV-DRO objective is challenging to estimate from finite samples, as the outer supremum depends on the \emph{conditional} risk $\EE[\ell(\param;(\x, \y)) | \x, \conf]$ rather than the loss $\ell(\param;(\x,\y))$.  

\paragraph{Finite Sample Estimation} Having defined the UV-DRO objective in terms of the population expectations, we now turn to the question of estimating this objective from finite samples. As we have mentioned earlier,
the empirical plug-in estimator fails to provide tight bounds for UV-DRO, and a Jensen's inequality argument shows that a naive plug-in UV-DRO estimator ignores the covariate shift structure, resulting in an estimator that is equivalent to the worst-case subpopulation objective over $\pjoint$.

One straightforward way to sidestep this challenge is to make smoothness assumptions on $\confcondrisk$. Let the $L_2$-normalized conditional risk $\frac{\hinge{\E[\ell(\theta;(\x, \y)) | \x, \conf]-\eta}}{\norm{\hinge{\E[\ell(\theta;(\x, \y)) | \x, \conf]-\eta}]}_2}$ be $L$-Lipschitz,\footnote{Incorporating smoothness acknowledges the fact that
  real world domain adaptation tasks are not arbitrary, and similar examples suffer similar conditional risk. This prior knowledge can help reduce the effective dimensionality of our inputs $\ol{c}$.} and $\mc{H}_L$ be the set of $L$-Lipschitz positive functions with $L_2$ norm less than one. Standard variational arguments for distributionally robust optimization in \citet{duchi2019distributionally} give the following variational upper bound:
\begin{align*}
 \small &R(\theta,\pconf)=\inf_{\eta}\frac{1}{\adro}\E[\hinge{\E[\ell(\theta;(\x, \y)) | \x, \conf]-\eta}]+\eta\\
  &\quad\leq \inf_{\eta} \sup_{h\in\mc{H}_L} \frac{1}{\adro}\E[h(\x,\conf)\left(\E[\ell(\theta;(\x, \y)) | \x, \conf]-\eta\right)]+\eta\\
  &\quad=: R_L(\theta)
\end{align*}
where the expectations are taken with respect to $\ptrain$.
The first step follows from standard convex duality for distributional robustness, while the second follows from the variational form of the $L_2$ norm. The dual form of $R_L$ has
a simple empirical plug-in estimator,

  \resizebox{0.47\textwidth}{!}{
    \begin{minipage}{0.57\textwidth}
\begin{equation}\label{eq:finiteobj}
    \begin{split}
      & \inf_{B,\eta\ge 0} \frac{1}{\adro}\bigg( \frac{1}{n} \sum_{i=1}^n
      \hingeBig{ \loss(\param; (\x_i,\y_i))
        -\sum_{j=1}^n (B_{ij} - B_{ji}) -\eta}^2
      \bigg)^{1/2} \\
      & \qquad  + \frac{L}{ n}
      \sum_{i,j = 1}^n \left(\norm{\x_i - \x_j}+\norm{\conf_i-\conf_j}\right) B_{ij} + \eta
      .
    \end{split}
  \end{equation}

\end{minipage}
}
For detailed derivation, see the appendix. This is a special case of the family of $L_p$ norm variational DRO estimators proposed and studied by \citet{duchi2019distributionally} and is known to converge to its population counterpart at rate $O(n^{-1/d})$. 

This estimator intuitively captures both the smoothness assumption and worst-case structure of our objective. The dual variable $\eta$ serves as a cutoff: all losses $\ell$ below $\eta$ within the sum are set to zero, forcing the model to focus on the worst losses incurred by the model. The dual variable $B$ is a transport matrix, where the entry $B_{ij}$ transports loss from example $i$ to $j$ in exchange for a cost of $L(\norm{\x_i-\x_j}+\norm{\conf_i-\conf_j})$. This smoothing ensures that the model focuses its attention on neighborhoods of the input $(\x,\conf)$ that systematically have high losses.

\section{Approximation with Crowdsourcing}\label{crowdsource}
\paragraph*{Effect of Approximating Unmeasured Variables}
\label{sec:approxconf}
Minimizing the UV-DRO objective~\eqref{eq:finiteobj} with $c$ provides a model which is robust to test time shifts that potentially change $y\mid x$. Unfortunately, we do not have access to the \emph{unmeasured} variables $c$, and instead only observe noisy and approximate samples $\ol{c}\mid x,y$ (e.g. natural language explanation from human crowdworkers).

We now characterize the conditions under which a model estimated using approximate unmeasured variables ($\ol{R}_L$) performs well on the true risk ($R_L$) under the unmeasured variable $c$. A major challenge in comparing approximate and true unmeasured variables is that $\ol{c}\in\olmc{C}$ and $c\in\mc{C}$ are unlikely to even exist in the same metric space. 

We overcome this difficulty by characterizing the risk in terms of an alignment. If there exists smooth functions $f$ and $g$ which align the space of approximate unmeasured variables $\olmc{C}$ with the space of true unmeasured variables $\mc{C}$, then the optimal model under the approximate $\ol{c}$ performs well on the true risk $R_L$.

\begin{prop}\label{prop:align}
  Let $f: \mc{C} \to \olmc{C}$ and $g: \olmc{C} \to \mc{C}$ be any $K_f$ and $K_g$ Lipschitz-continuous functions. For positive losses bounded above by $M$, the minimizer for the approximate risk ($\ol{R}_L$) given by 
  $\ol{\theta}^* := \arg\min_\theta \ol{R}_{L}(\theta)$ 
  fulfills
  \begin{align*}
    &R_L(\ol{\theta}^*) - \inf_\theta R_{L}(\theta) \\
    &\leq \inf_\theta R_{L}(\theta)\left(K_f K_g-1\right)+\frac{LM}{\alpha}(A_fK_g + A_g)
  \end{align*}
  where
  \begin{align*}
    \small
    A_f = \EE W_1(\ol{c}|xy, f(c)|xy) ~~\text{and}~~
    A_g = \EE W_1(c|xy, g(\ol{c})|xy)
  \end{align*}
  and $W_1(\ol{c},f(c))$ is the Wasserstein distance between the distribution of $\ol{c}$ and the pushforward measure of $c$ under $f$. 
\end{prop}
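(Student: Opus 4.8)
The plan is to reduce to a one-sided \emph{comparison lemma} between the true risk $R_L$ and the approximate risk $\ol{R}_L$, and then chain it with the optimality of $\ol{\theta}^*$. The lemma I will establish is that, for every $\theta$,
\[
  R_L(\theta) \le K_g\,\ol{R}_L(\theta) + \frac{LM}{\alpha}A_g
  \qquad\text{and}\qquad
  \ol{R}_L(\theta) \le K_f\,R_L(\theta) + \frac{LM}{\alpha}A_f .
\]
Granting this, set $\theta^\star := \arg\min_\theta R_L(\theta)$. Applying the first inequality at $\ol{\theta}^*$, then the optimality $\ol{R}_L(\ol{\theta}^*) \le \ol{R}_L(\theta^\star)$, then the second inequality at $\theta^\star$,
\[
  R_L(\ol{\theta}^*) \le K_g\,\ol{R}_L(\ol{\theta}^*) + \tfrac{LM}{\alpha}A_g
  \le K_g\,\ol{R}_L(\theta^\star) + \tfrac{LM}{\alpha}A_g
  \le K_fK_g\,R_L(\theta^\star) + \tfrac{LM}{\alpha}\big(K_g A_f + A_g\big),
\]
and subtracting $R_L(\theta^\star) = \inf_\theta R_L(\theta)$ from both sides is exactly the claimed bound.

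To prove the comparison lemma I use the variational form $R_L(\theta) = \inf_{\eta\ge0}\sup_{h\in\mc{H}_L}\frac1\alpha\E[h(\x,\conf)(\ell(\theta;(\x,\y))-\eta)]+\eta$ — valid since, by the tower property, $\E[h(\x,\conf)\,\E[\ell\mid\x,\conf]]=\E[h(\x,\conf)\,\ell]$, so $h$ pairs directly with the loss. Write $R_L(\theta;\eta)$ (resp.\ $\ol{R}_L(\theta;\eta)$) for the inner supremum plus $\eta$, so $R_L(\theta)=\inf_{\eta\ge0}R_L(\theta;\eta)$. Fix $\theta$ and $\eta\in[0,M]$ (the infimum may be taken over this range since $0\le\ell\le M$), and let $h\in\mc{H}_L$ be a near-maximizer for $R_L(\theta;\eta)$. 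Transport $h$ across the alignment by $\bar h(\x,\ol{\conf}) := h(\x,g(\ol{\conf}))$ (taking $K_g\ge1$ without loss of generality): $\bar h$ is positive and $LK_g$-Lipschitz, hence $\bar h/K_g\in\mc{H}_L$, so $\ol{R}_L(\theta;\eta)\ge\frac1{\alpha K_g}\E[\bar h(\x,\ol{\conf})(\ell-\eta)]+\eta$. Conditioning on $(\x,\y)$ and applying Kantorovich--Rubinstein to the $L$-Lipschitz slice $h(\x,\cdot)$ bounds the transport error,
\[
  \Big|\E[h(\x,\conf)(\ell-\eta)] - \E[\bar h(\x,\ol{\conf})(\ell-\eta)]\Big|
  \le \EE_{(\x,\y)}\!\big[\,|\ell-\eta|\,L\,W_1(\conf\mid\x\y,\; g(\ol{\conf})\mid\x\y)\,\big]
  \le LM\,A_g .
\]
Combining the two displays with a short computation gives $R_L(\theta;\eta)\le K_g\,\ol{R}_L(\theta;\eta)+\frac{LM}{\alpha}A_g-(K_g-1)\eta$, and since $\eta\ge0$ and $K_g\ge1$ the last term is nonpositive; evaluating at the minimizer of $\eta\mapsto\ol{R}_L(\theta;\eta)$ yields the first inequality. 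The second is symmetric, transporting the $\ol{R}_L$-maximizer by $h(\x,\conf):=\bar h(\x,f(\conf))$ and using $A_f$.

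The step I expect to be the main obstacle is verifying that the transported test function $\bar h/K_g$ truly lies in $\mc{H}_L$. The Lipschitz bookkeeping is routine — precomposition with a $K$-Lipschitz map scales the modulus by $K$, which is the source of the $K_f K_g$ factor — but the $L_2$-norm constraint defining $\mc{H}_L$ is with respect to the law of $(\x,\ol{\conf})$, whereas $h\circ(\mathrm{id},g)$ has controlled $L_2$-norm only with respect to the pushforward of that law under $(\mathrm{id},g)$. Closing this gap needs either the observation that the same Wasserstein term $A_g$ also controls this norm mismatch (introducing at most lower-order corrections) or a mild uniform bound on the Lipschitz competitors, e.g.\ from bounded domains. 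The remaining issues — restricting $\eta$ to $[0,M]$, absorbing the $\epsilon$-slack of the near-maximizer, and reducing to $K_f,K_g\ge1$ — are handled by standard arguments.
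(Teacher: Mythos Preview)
Your proposal is essentially the paper's own argument: establish the two one-sided comparison inequalities $R_L\le K_g\,\ol{R}_L+\frac{LM}{\alpha}A_g$ and $\ol{R}_L\le K_f\,R_L+\frac{LM}{\alpha}A_f$ (stated as a separate proposition in the appendix), then chain them through the optimality of $\ol{\theta}^*$ exactly as you do. The mechanics---transport the sup-attaining $h$ through the alignment map, condition on $(\x,\y)$, apply Kantorovich--Rubinstein to the $L$-Lipschitz slice $h(\x,\cdot)$, and bound $|\ell-\eta|\le M$---match the paper line for line.

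On the obstacle you flag: the paper does not resolve the $L_2$-norm issue either. In the appendix the hypothesis is silently strengthened from ``$f$ is $K_f$-Lipschitz'' to ``$f$ is such that $\frac{1}{K_f}\hat h\in\mc{H}_L$ for every $\ol h\in\ol{\mc{H}}_L$,'' i.e.\ whatever constant is needed to restore both the Lipschitz \emph{and} the $L_2$ constraint after transport is absorbed into $K_f$ by definition. So your instinct that this is the delicate point is correct; the paper simply folds it into the assumption rather than deriving it from Lipschitz continuity of $f$ alone.
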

See the appendix for proofs and additional bounds.

The $K_fK_g$ distortion term captures the fact that a Lipschitz continuity assumption under $\ol{c}$ differs from one under $c$. The additive terms $A_f,A_g$ captures the distributional differences between $c|xy$ and $\ol{c}|xy$. Note that if $\ol{c}$ and $c$ share the same metric space, the relative error term $(K_fK_g-1)$ is zero, and the model approximation quality depends on the average Wasserstein distance between $c\mid xy$ and $\ol{c}\mid xy$. 

\paragraph{Crowdsourcing for Elicitation}
To better understand the approximation bound, consider the example of a digit recognition task confounded by rotation, where we are asked to classify images ($x$) of digits which have undergone an unobserved rotation ($c$). The unmeasured variable is a real-valued angle, but $\ol{c}$ is a natural language annotation with the metric defined by vector embeddings of sentences (Figure \ref{fig:confcap}).

In this example, the distortion term $K_fK_g$ captures whether the distance between natural language description of two images whose rotations differ by $d$ degrees is close to $d$. The Wasserstein term captures the fact that natural language descriptions can be noisy, and we can sometimes get annotations that do not correspond to any $c$.

We attempt to mitigate the effect of two terms through the crowdsourcing design:
\begin{enumerate}
  \itemsep=0pt
  \item Each user annotated many examples to reduce phrasing variation (a person using ``turn" instead of ``rotate" will likely only refer to rotations as ``turn'', reducing annotator variation).
	\item We selected a vector sentence representation (Sent2Vec) whose distances have been shown to correlate with semantic similarity.
	\item We collected and averaged multiple annotations per training example to reduce crowdsourcing noise.

\end{enumerate}
This data collection procedure allows us to capture the well-studied ability for humans to identify unobserved causes \cite{schulz2008unobserved, saxe2007causal} and allows us to obtain more robust models under several types of bias from unmeasured variables.

\section{Experimental Results} \label{results}
We now demonstrate that distributional robustness over unmeasured variables (UV-DRO) results in more robust models that rely less upon spurious correlations. Across all experiments, we show that UV-DRO achieves more robust models than baselines as well as other DRO objectives, including that of \citet{duchi2019distributionally} (``Covariate Shift DRO") and \citet{hashimoto2018repeated} (``Baseline DRO").
\paragraph*{Experimental Procedures.} Both the linear regression (Section~\ref{sec:simdata}) and logistic regression models (Sections~\ref{mnist} and \ref{sec:stopfrisk}) were optimized using batch gradient descent with AdaGrad. We tuned hyperparameters such as the learning rate, regularization, and DRO parameters using a held-out validation set, which we describe in the appendix.

Human annotations were performed by crowdworkers on Amazon Mechanical Turk, and the specific prompts for each task are included in the relevant sections. In both tasks, we define the distance between two annotations $c$ by embedding each sentence into vector space with the FastText Sent2Vec library, and measuring the average cosine distance between the two vectors across two replicate annotations.

\subsection{Simulated Medical Diagnosis Task}
\label{sec:simdata}
We begin with a simple simulated medical diagnosis dataset. One source of bias in medical datasets is that patients can sometimes lie about symptoms to their doctors. It has been well-studied that adolescents have a substantially higher chance of lying to physicians about sexual activity \cite{zhao2016teen}, which complicates medical diagnosis and the ability to prescribe teratogenic drugs such as the acne drug Accutane \citep{honein2001accutane}. In this example, age is an effect modifier which for simplicity we assume is the only unmeasured variable \citep{van2012effect}.

We consider a simplified scenario of using the patient's self-reported pregnancy symptoms $\x_1$ and clinical measurements $\x_2$ to predict pregnancy $\y$ via a least-squares linear regression model ($\y=\beta^\top\x+b$).
We demonstrate that a model trained with empirical risk minimization (ERM) learns the unreliable correlation between self-reporting and pregnancy, while UV-DRO using an imputed age learns to use noisier but more reliable clinical measurements. 

We will define the data generating distribution for our observations as $\x_1 = \conf\y$ where $\conf \sim 1-2~\text{Bernoulli}(q)$ is the patient's truthfulness, and the clinical measurements follow $\x_2 = \y+\epsilon$ where $\epsilon \sim N(0,4)$ is a measurement noise term. We evaluate the models on a series of training distributions with a mix of adults and adolescents where the probability of lying ranges over  $q_{train} = \{.05, .1, .2, .3, .4, .5, .6, .7, .8\}$. At test time, our model is applied to adolescents who lie with probability $q_{test}=0.8$. These datasets fulfill the subpopulation condition (i.e. $\pconf$) where the train-test overlap varies over $\atrue = \{.0625, .125, .250, .375, .5, .625, .750, .875, 1.0\}$.

From our data generating distribution, we can see that $\conf$ is an unmeasured variable which affects the correlation between $\y$ and $\x_1$. During training time, where the patient set largely consists of adults that are less likely to lie, $\conf$ is often 1, and a model optimized on this data will predict $\y$ using primarily $\x_1$, as shown by the low relative weight of $\x_2$ in Figure \ref{toyexample_weights}. However, at test time when there are many adolescent patients who have high likelihood of lying, the correlation between $\x_1$ and $\y$ is reversed, making this model perform poorly on the test set  (Figure \ref{toyexample_loss}).

On the other hand, if we apply UV-DRO with $\conf$ sampled according to the true conditional distribution $\conf \mid \x_1,\x_2,\y$, then our loss will account for the fact that the correlation between $\x_1$ and $\y$ may flip at test time, and our learned model uses $\x_2$  (Figure \ref{toyexample_weights})---which reliably measures $\y$. This results in substantial gains in test performance that are stable across a wide range of $\atrue$s (Figure \ref{toyexample_loss}). Finally, we observe that conditioning on $y$ is critical when generating $c$, and using $\conf \mid \x$ instead, as would be the case if we sampled $c$ at test time, fails to improve robustness (Figure \ref{toyexample_loss}).

\begin{figure}[h] 
  \centering
\includegraphics[scale=0.1]{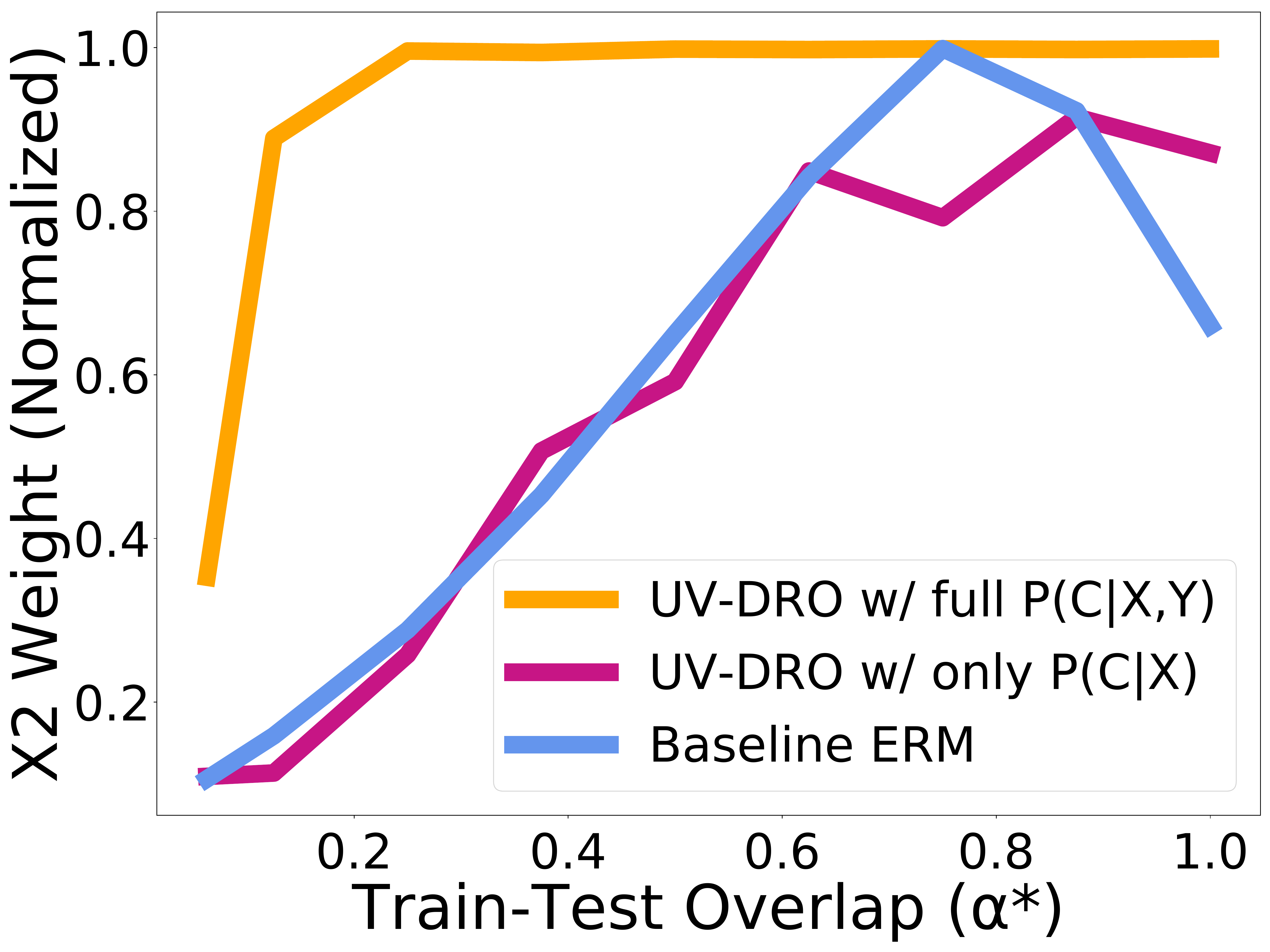}
\caption{UV-DRO consistently places higher relative weight on more reliable feature $\x_2$ over $\x_1$,  unlike both ERM and a baseline UV-DRO with $\conf$ drawn without access to label $\y$.}
\label{toyexample_weights}
\vspace{-5pt}
\end{figure}

\begin{figure}[h] 
  \centering
\includegraphics[scale=0.1]{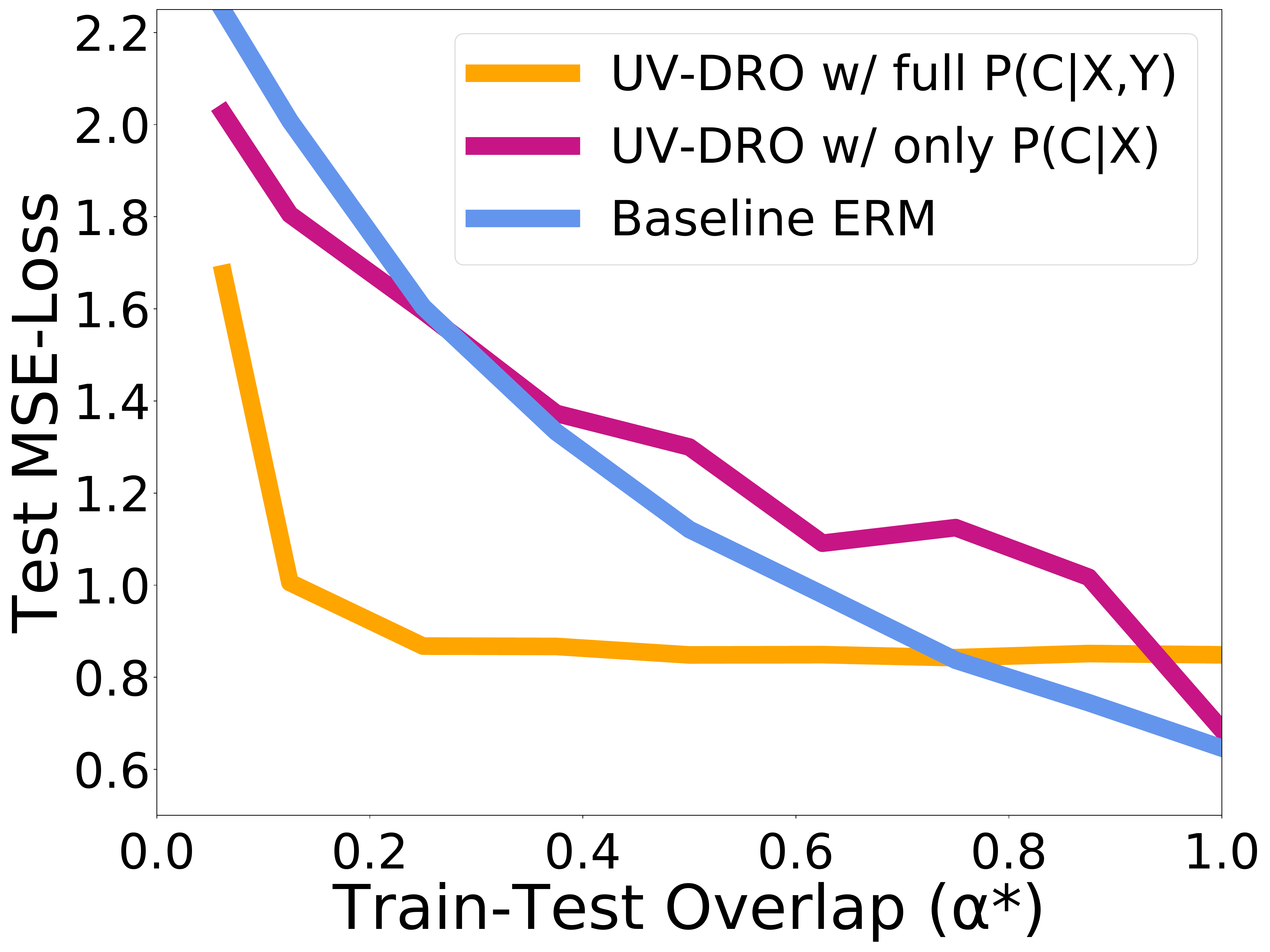}
\caption{UV-DRO achieves lower loss than both ERM and a baseline UV-DRO with $\conf$ drawn without access to label $\y$.}
\label{toyexample_loss}
\vspace{-5pt}
\end{figure}

\subsection{Digit Classification Under Transformations}
\label{mnist}

We evaluate the efficacy of UV-DRO on synthetic domain shifts on the MNIST digit classification task. Specifically, we apply random rotations or occlusions to the images and treat the identity of these transformations as an unmeasured variable. We show that if the distribution of such unmeasured variables shifts from training to test sets, classification accuracy for a simple logistic regression model degrades rapidly for both ERM and existing DRO approaches, but this performance loss is mitigated when using UV-DRO.

Examples of these image transformations, as well as crowdsourced annotations, are shown in Figure \ref{fig:confcap}. We can see that digits such as rotated 6s and 9s can become difficult or impossible to distinguish without knowledge of the rotation angle, and our logistic regression model's performance rapidly degrades as the fraction of rotated digits changes between train and test sets.
\begin{figure}[h]
  \centering
  \includegraphics[scale=0.25]{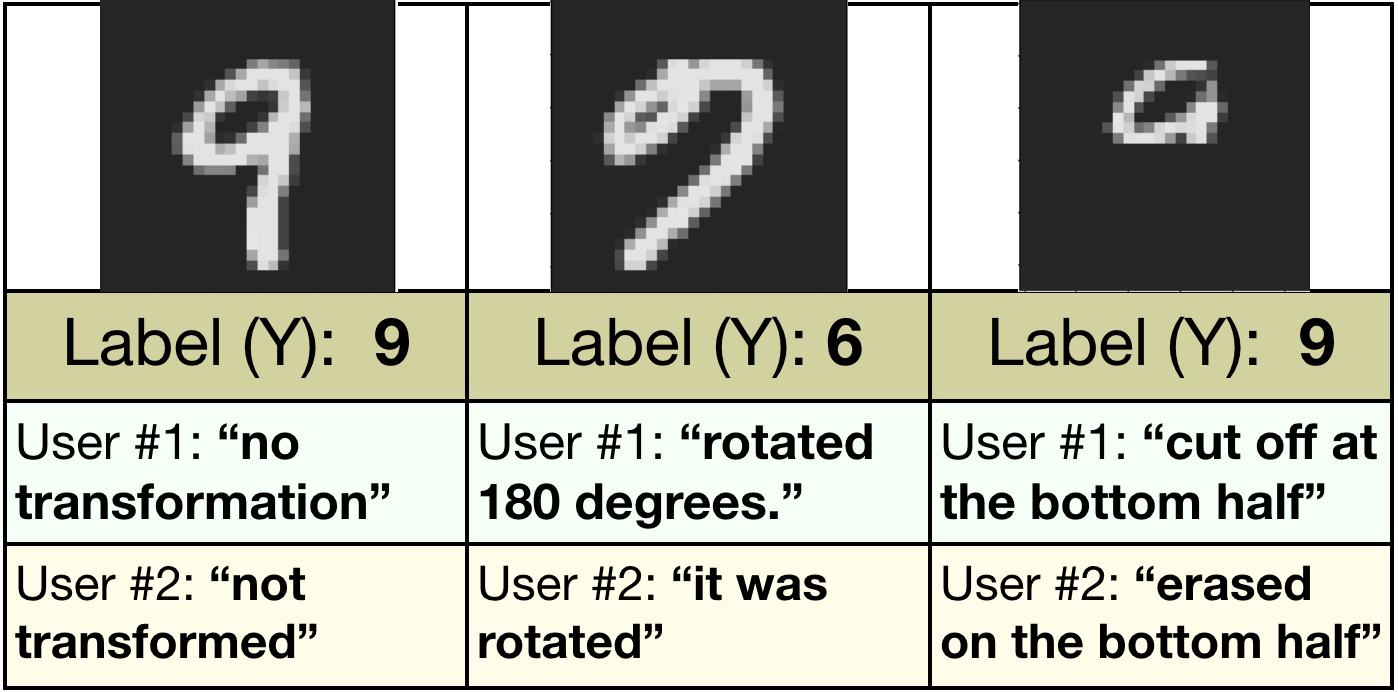}
  \caption{Examples from our training dataset as well as the user-provided annotations from our crowdsourcing task. Human crowdworkers are able to directly recover the unmeasured variables, leading to nearly oracle level performance for UV-DRO.}\label{fig:confcap}
  \vspace{-10pt}
  \end{figure}

\begin{figure*}[t]
  \centering
  \begin{subfigure}[t]{0.3\textwidth}
  \includegraphics[scale=0.1]{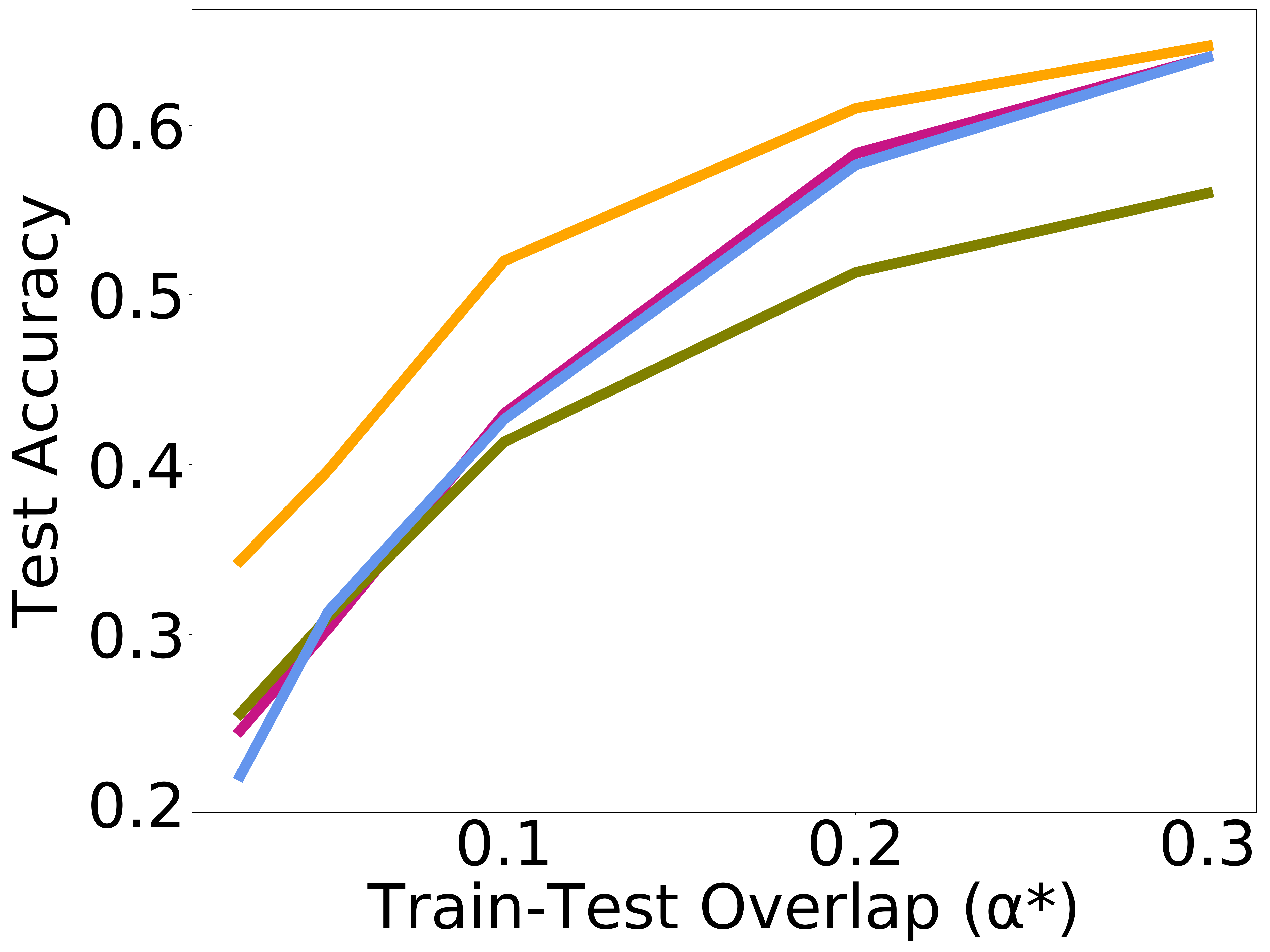}
  \caption{Accuracy vs. minority group size}\label{fig:mnistacc}
  \end{subfigure}
~
  \begin{subfigure}[t]{0.3\textwidth}
  \includegraphics[scale=0.1]{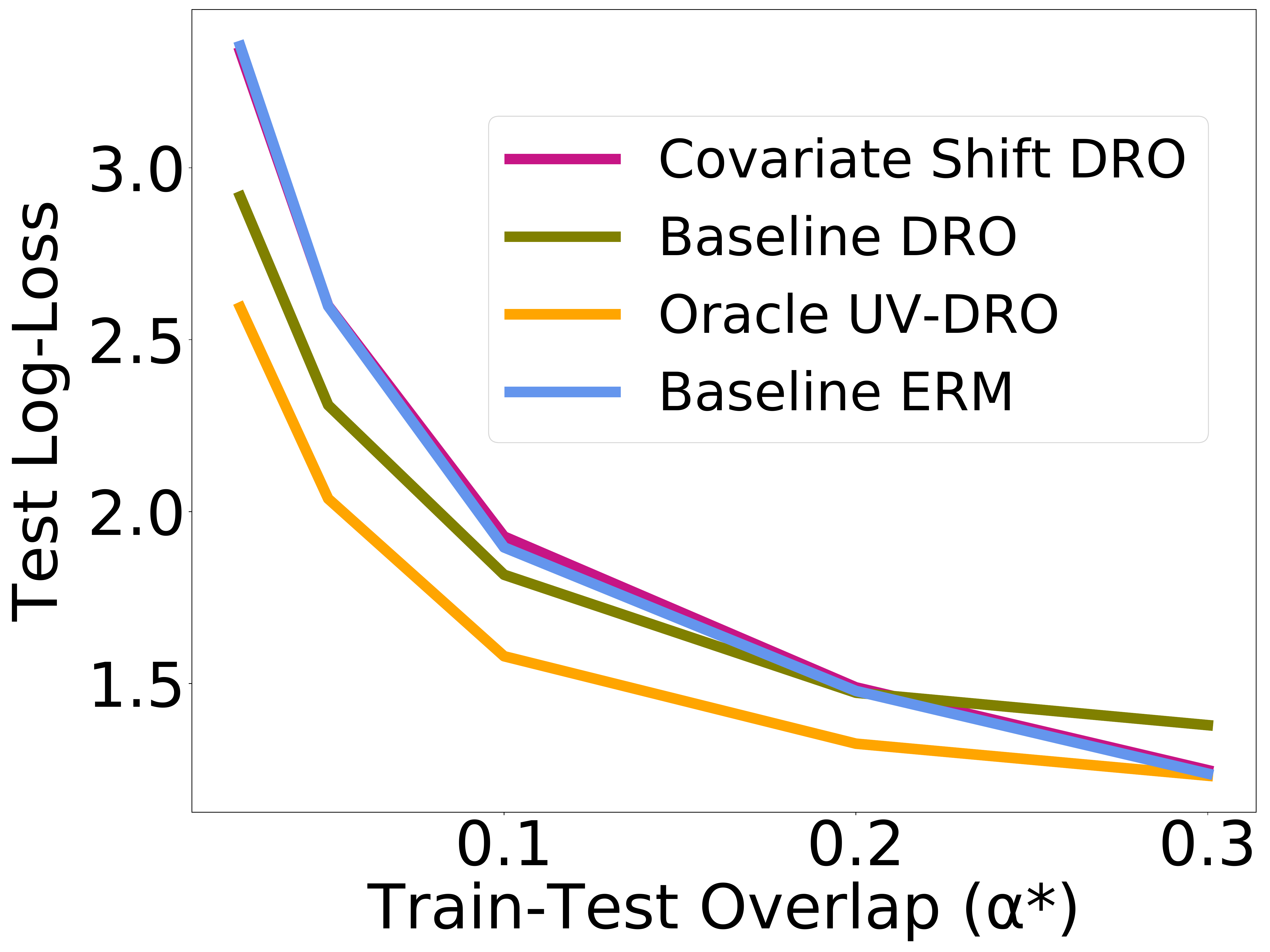}
  \caption{Log-Loss vs. minority group size}\label{fig:mnistloss}
\end{subfigure}
~
\begin{subfigure}[t]{0.3\textwidth}
  \includegraphics[scale=0.04]{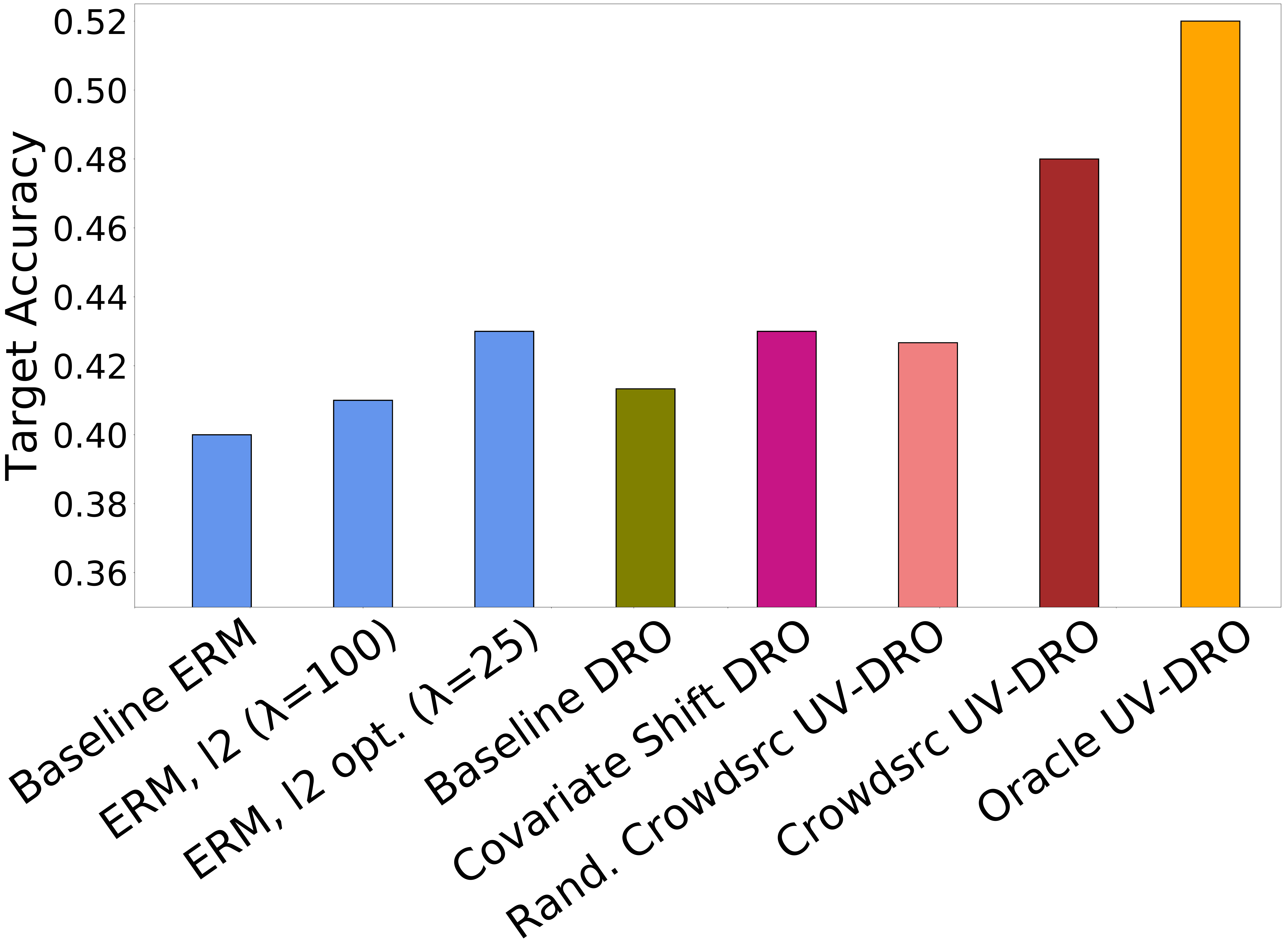}
  \caption{Accuracy with crowdsourced data}
  \label{fig:mnist_turk_acc}
  \end{subfigure}
  \caption{For the MNIST Digit Classification task, our UV-DRO approach using oracle unmeasured variable values results in substantial improvements in both accuracy and loss under large train test shifts ($\atrue$). Using crowdsourced annotations with our UV-DRO approach successfully provides an accuracy gain ($48\%$ UV-DRO vs. $43\%$ ERM) over the Baseline ERM models, as well as other DRO approaches.}
  \label{fig:mnistoracle}
  \end{figure*}
  
\begin{figure}[h!]
  \centering
  \includegraphics[scale=0.15]{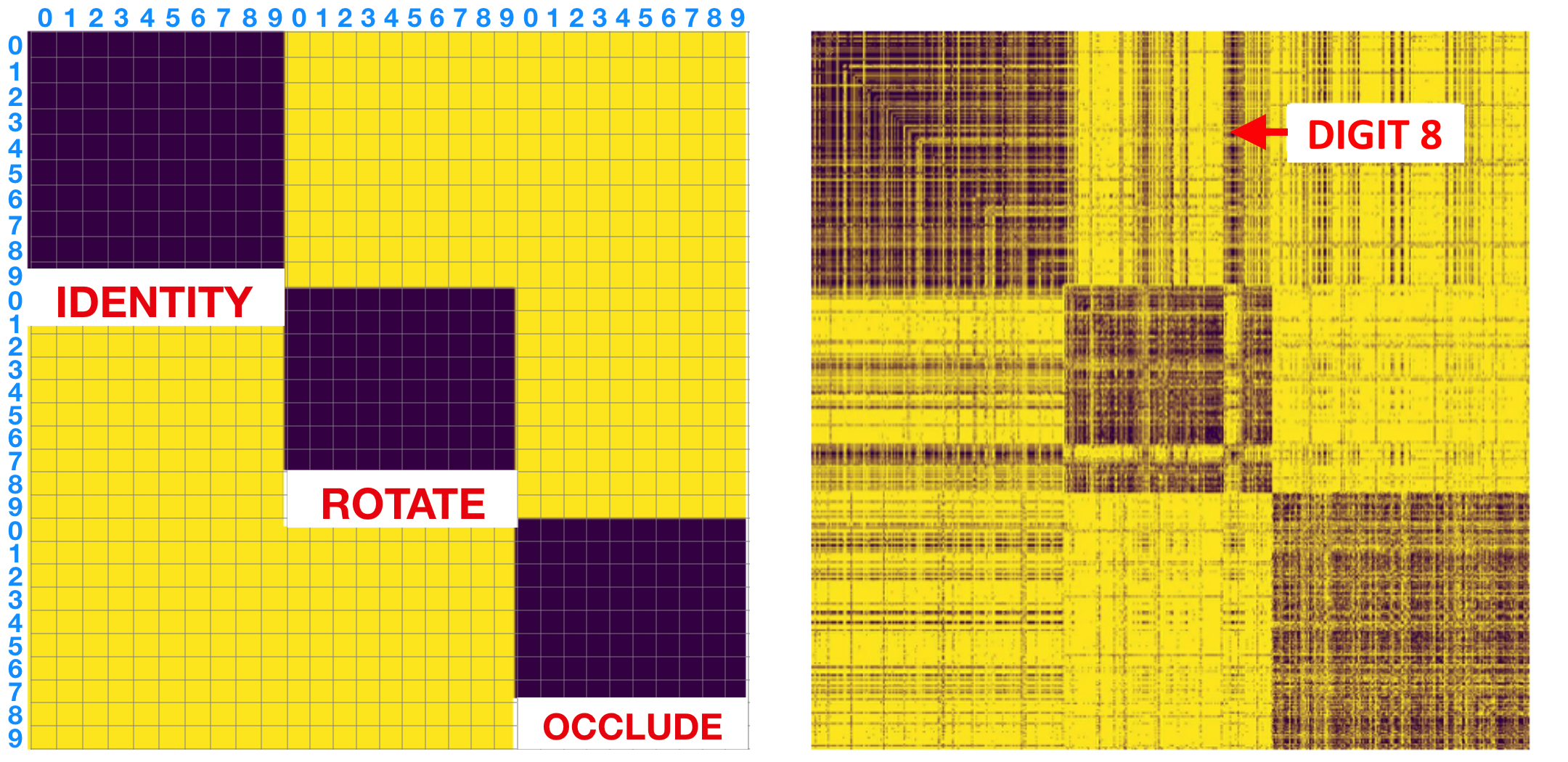}
  \caption{Cosine-distances between natural language annotations (right) strongly correlates with ground truth (left). The distance matrix is ordered by the transformation type and then digit. Note the  confusion between ``rotation" and ``identity" transform for digit 8, which is invariant to 180$^{\circ}$ rotations.}
  \label{fig:mnist_heatmap}
  \vspace{-12pt}
  \end{figure}
\paragraph{Estimation with an oracle}
In our first experiment, we consider the oracle setting, where the unmeasured variable $\conf$ is a rotation angle and we obtain samples from the joint distribution $(\x,\y,\conf)$ to use with UV-DRO for predicting $y$ given $x$. The training distribution consists of images of which a proportion $\atrue=\{0.05, 0.1, 0.2, 0.4, 0.6\}$ are transformed, while the test set consists only of images that are rotated by 180 degrees.

Unsurprisingly, we find low performance for baselines which do not use $\conf$ at small values of $\atrue$  (Figure \ref{fig:mnistacc}). This includes ERM (with and without $L_2$ regularization), DRO over $\pjoint$ (baseline DRO), and DRO over just the features $\pcov$ (covariate shift DRO). UV-DRO substantially improves performance over these baselines, with a $15\%$ \emph{absolute} accuracy gain for the most extreme shift of $\atrue=0.05$, as well as substantial accuracy gains that persist until $\atrue=0.6$ (Figure \ref{fig:mnistacc}). The same trend holds for the log-losses incurred by each model (Figure~\ref{fig:mnistloss}).

\paragraph{Estimation with crowdsourced unmeasured variables}
We next demonstrate that UV-DRO performs well even with crowdsourcing the unmeasured variables, and show this performance approaches that of the  oracle. Specifically, we consider a training distribution where images are either rotated with probability $0.1$, occluded with probability $0.1$, or not manipulated. The test distribution is the same as before, with all images rotated.

  In order to obtain samples from $p(\conf|\x,\y)$, we performed an Amazon Mechanical Turk task where crowdworkers were shown $(\x,\y)$ pairs of images ($\x$) and their label ($\y$) from our training dataset of 4000 images. Each user was prompted to answer a free-text question, ``What transformation do you think happened to the image?". Importantly, we \textit{did not} inform the users what types of unmeasured variables are present or possible in the dataset. We processed these natural language descriptions into a distance metric suitable for UV-DRO using our embedding procedure described earlier. We find that the resulting distances closely match the true unmeasured variable structure (Figure~\ref{fig:mnist_heatmap}).

  Training the UV-DRO model on this dataset, Figure ~\ref{fig:mnist_turk_acc} shows substantial accuracy improvement from crowdsourced unmeasured variables ($48\%$) compared to the ERM ($43\%$) and two DRO (41--43\%) baselines. Surprisingly, we also observe that crowdsourcing unmeasured variables results in only a $4\%$ accuracy drop relative to using oracle $\conf$'s ($52\%$), showing that we substantially close the gap to the optimal robust model. Finally, a randomly shuffled permutation of the annotation distances causes a significant drop in accuracy ($42\%$), showing that it is the crowdsourced information---rather than loss or hyperparameter changes---that results in performance gains. Further analysis on the effect of crowdsourcing quality is in the appendix.  

\subsection{Analyzing Policing Under Location Shifts}
\label{sec:stopfrisk}

\begin{figure*}[t]
  \begin{subfigure}{0.3\textwidth}
  \includegraphics[scale=0.09]{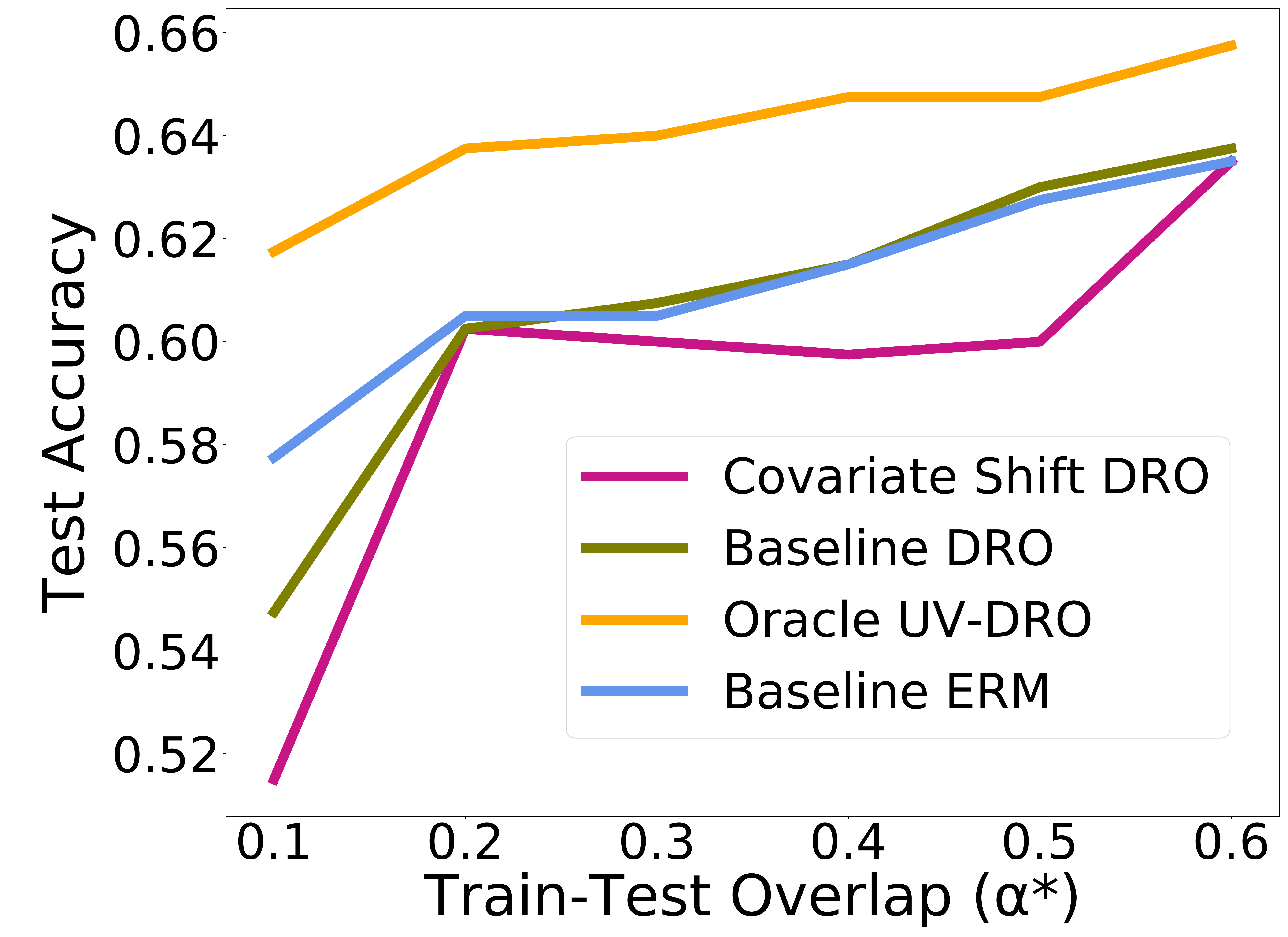}
  \caption{Accuracy vs. minority group size}
\end{subfigure}
  \begin{subfigure}{0.3\textwidth}
  \includegraphics[scale=0.09]{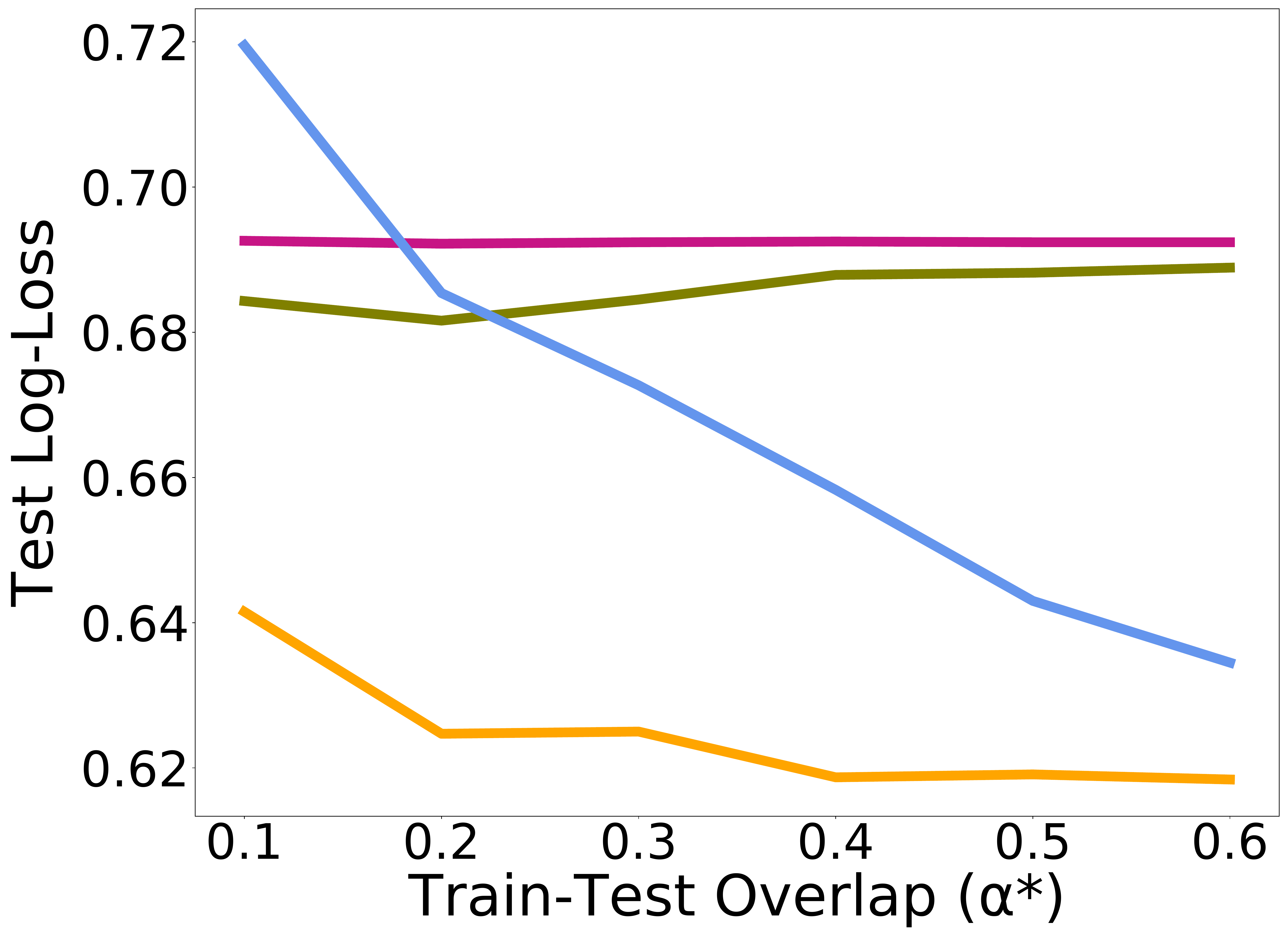}
  \caption{Log-Loss vs. minority group size}
\end{subfigure}  
  \begin{subfigure}{0.3\textwidth}
  \hspace{1em}
  \includegraphics[scale=0.045]{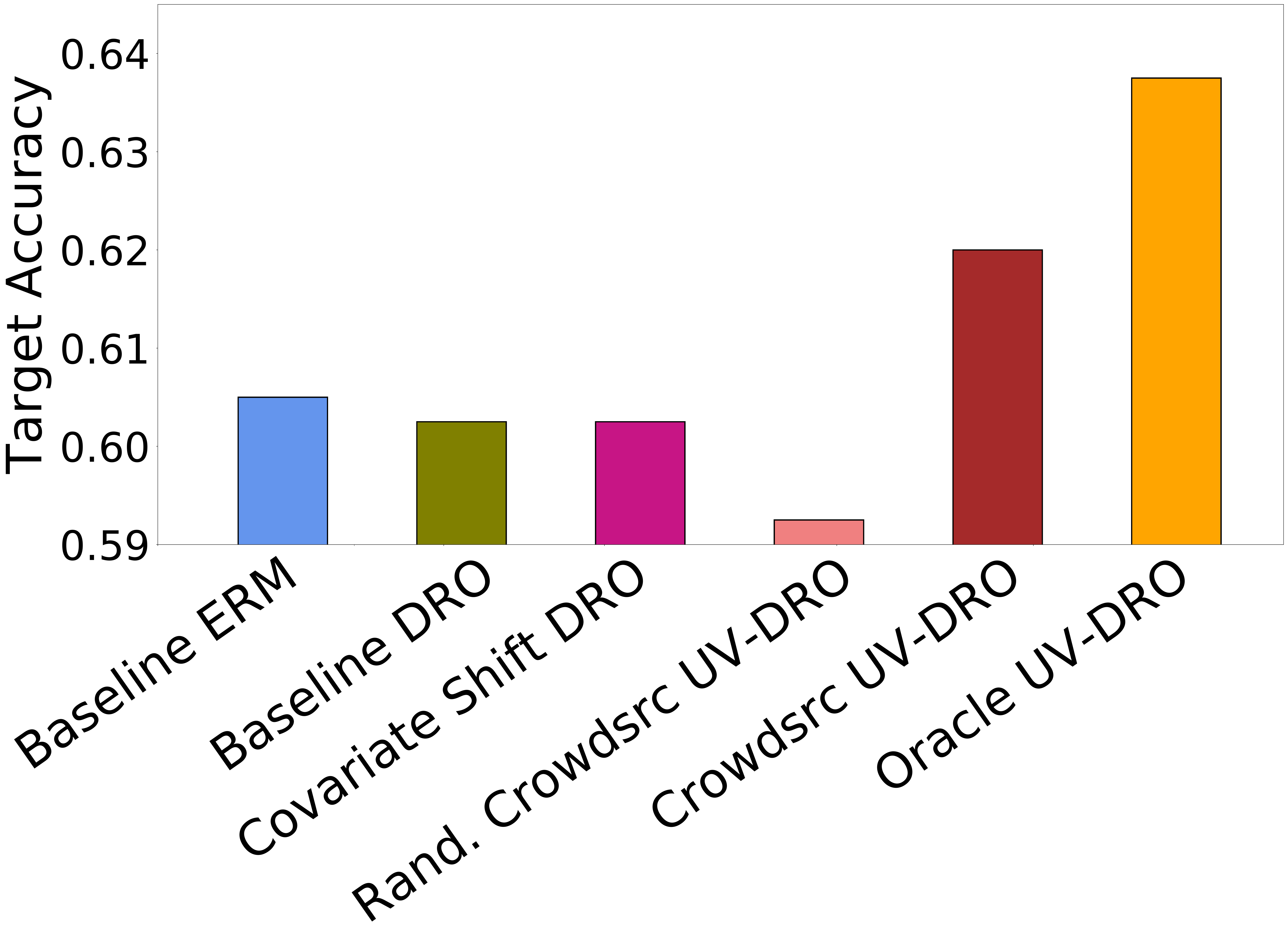}
  \caption{Accuracy on crowdsourced data }
  \end{subfigure}
  \caption{For the police stop analysis, our UV-DRO approach with oracle location variables provides consistent improvements in accuracy and loss. Using crowdsourced annotations with UV-DRO also improves accuracy ($62\%$ UV-DRO vs. $60.5\%$ ERM) over the Baseline ERM model and other DRO approaches. Only the unregularized ERM model is shown, as we found $\lambda=0$ to be optimal.}
  \vspace{-8pt}
  \label{fig:sf_oracle_acc}
\end{figure*}

Having demonstrated gains on the semi-synthetic MNIST task, we evaluate UV-DRO on a more complex real-world distribution shift.
Stop-and-frisk is a controversial program of temporarily stopping, questioning, or searching civilians by the police, and has been well-studied for amplifying racial biases. We consider the task of trying to detect false positives---or police stops that do not result in arrests---by training classifiers on data from police stops spanning 2003-2014 in New York City \citep{nyclu2019data}.

\begin{figure}[h]
  \centering
\includegraphics[scale=0.21]{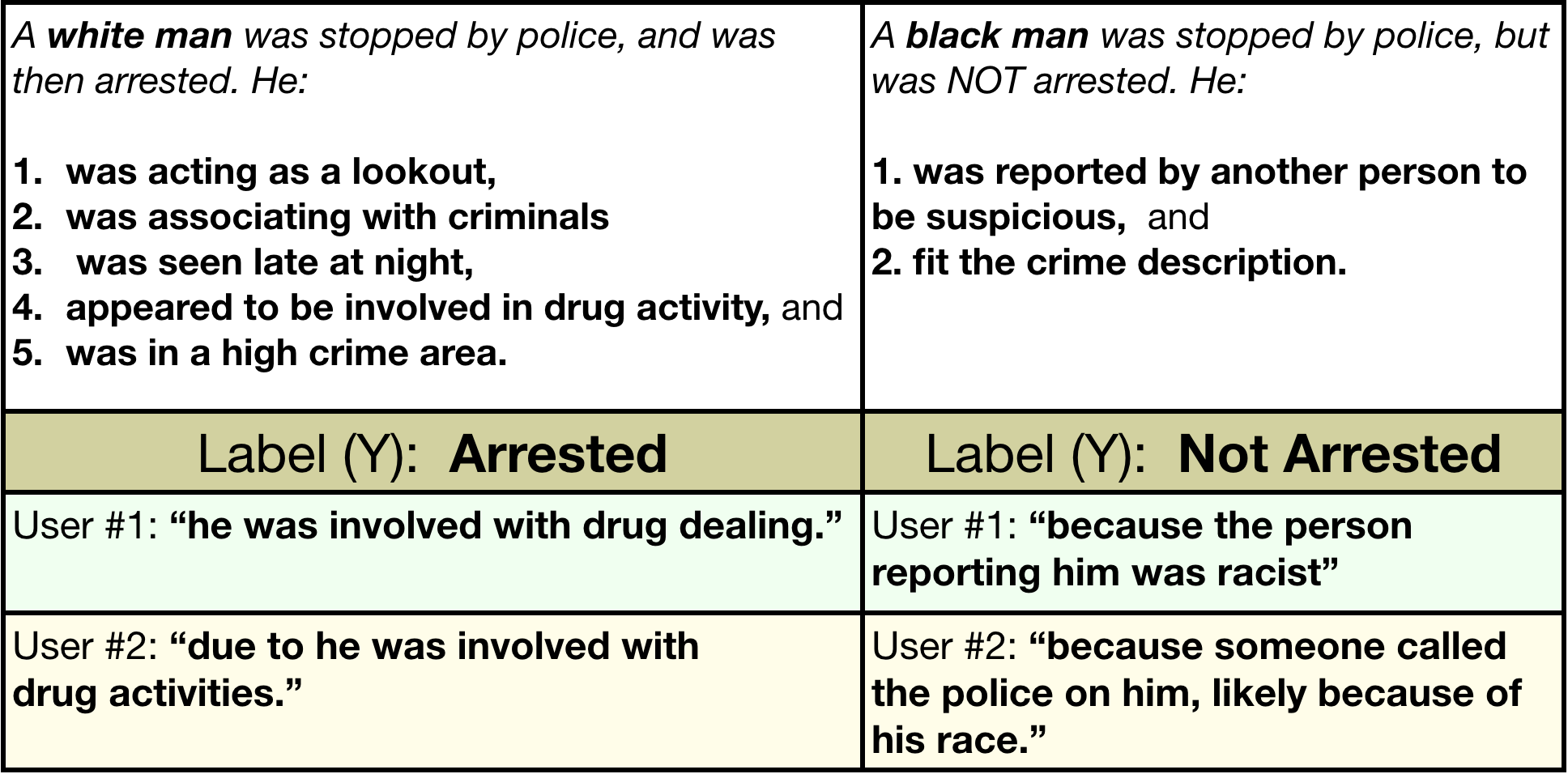}
\caption{Examples from the stop-and-frisk task identify perceived reliable features (left) as well as factors such as racism (right).}\label{fig:examplesf}
\vspace{-12pt}
  \end{figure}
For our observed features, we consider a set of 27 possible observations reported by police officers as reasons for stops, such as ``furtive movements'' and ``outline of weapon''. Examples from this dataset, as well as crowdsourced annotations, can be seen in Figure \ref{fig:examplesf}. Previous work on this dataset has shown that racial minorities are stopped more frequently for less serious observations \citep{goel2016stopandfrisk, gelman2007police}.

Our unmeasured variable in this setting is the location of the police stop.  We consider training data where the majority of police stops are from Manhattan, and measure the model performance on stops in Brooklyn. This is a natural form of domain shift, where our goal is for a model to make predictions that are reliable regardless of location.

\paragraph{Estimation with an oracle}
We first consider UV-DRO with an oracle, where the distribution of unmeasured variables matches the ground truth. The training distributions consists of police stops which occurred in Manhattan, except for a proportion $\atrue=\{0.1, 0.2, 0.3, 0.4, 0.5, 0.6\}$ which occurred in Brooklyn. The test distribution consists solely of police stops in Brooklyn.

Similar to our previous experiments, we see that this task is very challenging under large train-test shifts, with most models performing near 50--60\% accuracy. For Logistic Regression, using UV-DRO with oracle provides substantial gains on both accuracy and loss over a wide range of $\atrue$ (Figure ~\ref{fig:sf_oracle_acc}). For accuracy, UV-DRO provides just under a $4\%$ accuracy gain across most values of $\atrue$, while DRO baselines surprisingly do \emph{worse} than naively using ERM. 

\paragraph{Estimation with crowdsourced unmeasured variables}
Unlike the previous digit classification task, police stops and observations
can be affected by an incredibly large set of confounders. While we hypothesize that it is unlikely for crowdworkers to
actually predict location as an unmeasured confounder, Proposition~\ref{prop:align} suggests that this is not necessary. We show that crowdworkers capture a variety of social and demographic factors, which are sufficiently indicative location to provide robustness gains.  

For this task, we fix the proportion of training examples from Brooklyn at $\atrue=0.2$. We present Amazon Mechanical Turk users $(\x,\y)$ pairs of police stop descriptions---including the full feature set of race, gender, and officer observations ($\x$)---and the label of whether the individual was arrested or not ($\y$). Each user is then asked, ``What factors do you think led to the individual being stopped and [arrested/not arrested]?", for which they provide free-text responses. We use the same procedure as the MNIST experiment to process these responses into a distance matrix. 

Figure ~\ref{fig:examplesf} includes example annotations we elicited from users. Many free-text annotations showed strong ability to recover social factors (i.e. racism), as well as filter features to identify the relevant factors for an arrest decision. 

 Training our UV-DRO model, we find that crowdsourced UV-DRO gives a 1.5\% accuracy gain, which is once again nearly half of the gap between the ERM baseline and oracle DRO (Figure ~\ref{fig:sf_oracle_acc}). Similar to our previous experiment, we find that existing DRO baselines, as well as shuffling our crowdsourcing data, result in no gains over ERM.

Finally, to understand how the crowdsourced annotations capture unmeasured variables, we trained a logistic regression model to predict the police stop location from (1) only observed features (61.3\% test accuracy), (2) observed features and annotation unigrams (64.8\%), and (3) observed features and arrest labels (65.9\%). These results confirm that the annotations indeed help provide more information about the location than the observed features. Further exploratory analysis on predicting the location from annotation unigrams results in assigned weights that are consistent in showing that race, police judgement, and individual circumstances were more predictive of Brooklyn while unigrams associated with violent crime were more predictive of Manhattan (See appendix). This suggests that UV-DRO can not only be used to improve model robustness, but also has the potential to improve interpretability by highlighting unmeasured variables that may result in spurious correlations.

\section{Related Works and Discussion} \label{discussion}

Although our work draws on ideas from domain adaptation \citep{mansour2009dams, mansour2009domain, bendavid2006analysis} causal invariance \citep{peters2016causal, meinshausen2015maximin, rothenhausler2018anchor}, and robust optimization \citep{bental2013robust, duchi2018learning, bertsimas2018data, lam2015quantifying}, few prior works seek to elicit information on the possible shifts in unmeasured variables. For example, \citet{heinze2017conditional} improve robustness under unobserved style shifts in images by relying on multiple images which vary only by style. Similarly, \citet{landeiro2016confounder} develop a back-door adjustment which controls for confounding in text classification tasks when the confounder is known. Recent work by \citet{kaushik2019difference} use crowdsourcing  to revise document text given a specific counterfactual label. This approach is complementary to our work. \citet{kaushik2019difference} seek to expand a dataset by collecting additional examples with flipped labels (resulting in better models over observed variables), while our work augments existing data as a way to capture potential distribution shifts over unmeasured variables.

The crowdsourcing aspect of our work builds on existing work on eliciting human commonsense understanding and counterfactual reasoning. \citet{roemmele2011copa} showed that humans achieve high performance on commonsense causal reasoning tasks, while \citet{sap2019atomic} has used crowdsourcing to build an ``if-then" commonsense reasoning dataset. These works support our results which show crowdsourcing can successfully capture how humans reason about unmeasured variables. 

 \paragraph*{Discussion}
 We have demonstrated that domain adaptation problems with unmeasured variables can be recast as covariate shift problems once we obtain samples from $\conf \mid \x,\y$ at train time. Notably, rather than expanding the training dataset, our work accounts for variables that may be inaccessible in an already existing dataset. Our UV-DRO approach and experiments show that crowdsourcing can be an effective way of eliciting these unmeasured variables, and we often obtain results close to an oracle model which uses the true $\conf$ distribution. This work is the first step towards explicitly incorporating human knowledge of potential unmeasured variables via natural language annotations, and opens the possibility of methods that make use of counterfactual explanations from domain experts to learn reliable models in high-stakes situations.  

 \paragraph*{Reproducibility}
 We provide all source code, data, and experiments as part of a worksheet on the CodaLab platform: \url{https://bit.ly/uvdro-codalab}.
 
 \paragraph*{Acknowledgements}
 We thank all anonymous reviewers for their valuable feedback. Toyota Research Institute (``TRI")  provided funds to assist the authors with their research but this article solely reflects the opinions and conclusions of its authors and not TRI or any other Toyota entity. MS was additionally supported by the NSF Graduate Research Fellowship Program under Grant No. DGE-1656518.  

\bibliography{all.bib}
\bibliographystyle{icml2020}

\appendix
\onecolumn
\counterwithin{figure}{section}
\counterwithin{table}{section}

\label{section:proof-conf-empirical-dual}

\newcommand{\R}{\mathbb{R}}
\newcommand{\lossitem}{\ell(\theta;(x_i,y_i))}
\newcommand{\subjectto}{s.t.}
\newcommand{\maximize}{\max}
\providecommand{\tr}{\mathop{\rm tr}}
\newcommand{\onevec}{\mathbf{1}}
\newcommand{\confthresh}{\varepsilon}
\newcommand{\opt}{^*}

\break
\break
\break
\break
\break
\break
\section{Derivation of the empirical dual estimator}
The arguments given here are a simplification of the class of duality arguments from \citet{duchi2019distributionally}.
Recall that the inner maximization $\sup_{h\in \mathcal{H}_L} \EE[h(x,c) (\EE[\ell(\theta;(x,y))|x,c]-\eta)]$ admits a plug-in estimator which can be written as a linear objective with Lipschitz smoothness and $L_2$ norm constraints,
\begin{align}
  \label{eq:primal-again}
  & \maximize_{h \in \R^n}
    \frac{1}{n} \sum_{i=1}^n h_i (\lossitem - \eta) \\
  & \subjectto ~~h_i \ge 0
    ~~\mbox{for all}~~i \in [n], 
    ~~\frac{1}{n} \sum_{i=1}^n h_i^2 \le 1, \nonumber \\
  & \hspace{45pt}
    ~~h_i - h_j
    \le L (\norm{x_i - x_j}+\norm{c_i-c_j})~\mbox{for all}~~i,j \in [n].
    \nonumber
\end{align}    

Now taking the dual with $\gamma \in \R^n_+$, $\lambda \ge 0$, and $B \in \R^{n \times n}_+$, the associated Lagrangian is
\begin{align*}
  \mc{L}(h, \gamma, \lambda, B)
  & \defeq
    \frac{1}{n} \sum_{i=1}^n h_i (\lossitem-\eta)
    + \frac{1}{n} \gamma^\top h
    + \frac{\lambda}{2} \left( 1- \frac{1}{n} \sum_{i=1}^n h_i^2\right) \\
  & \qquad  + \frac{1}{n}
    \left( L \tr(B^\top D) - h^\top
    (B\onevec - B^\top \onevec)
    \right) \\
\end{align*}
where $D \in \R^{n \times n}$ is a matrix with entries
$D_{ij} = \norm{x_i - x_j}+\norm{c_i-c_j}$. From strong duality, the primal
optimal value~\eqref{eq:primal-again} is
$\inf_{\gamma \in \R^n_+, \lambda \ge 0, B \in \R^{n \times n}_+} \sup_{h}
\mc{L}(h, \gamma, \lambda, B)$.

The first order conditions for the inner supremum give
\begin{align*}
  h\opt_i & \defeq \frac{1}{\lambda} \left(
  \lossitem-\eta + \gamma - (B\onevec - B^\top \onevec)_i
            \right). \\
\end{align*}
Substituting these values and
taking the infimum over $\lambda, \gamma \ge 0$,
we obtain
\begin{align*}
  \inf_{\lambda \ge 0, \gamma \in \R^n_+}
  \sup_{h} \mc{L}( h, \gamma, \lambda, B)
  & =  \bigg( \frac{1}{n} \sum_{i=1}^n
      \hingeBig{ \lossitem
      -  \sum_{j=1}^n (B_{ij} - B_{ji}) -\eta}^2
  \bigg)^{1/2} \\
&  + \frac{L}{n} \sum_{i,j = 1}^n (\norm{x_i - x_j} + \norm{c_i-c_j}) B_{ij}.
\end{align*}
Taking the infimum over $B,\eta $ and substituting this expression into the inner supremum of $R_L$ gives the desired estimator.

\section{Distortion Proof}
Terminology in this section generally follows that of the main text. We will use $c$ to describe some true set of unmeasured variables, and $\overline{c}$ to describe the elicited set. All notation with overhead lines are defined in this space of elicited unmeasured variables (e.g. $\ol{h}$, $\ol{\mc{H}_L}$).

Additionally we will define a forward map from true unmeasured variables to elicited ones, $f: \mc{C} \to \ol{\mc{C}}$ and a reverse map from elicited unmeasured variables to true ones $g: \ol{\mc{C}} \to \mc{C}$.

For convenience, define the following risk functionals for the DRO problem under the true unmeasured variables
\[R_L(\theta) := \inf_\eta \sup_{h\in\mc{H}_L} \frac{1}{\alpha}\EE_{x,y,c}[ h(x,c) \ell(x,y) - \eta] + \eta,\]
and under the estimated ones
\begin{equation}\label{eq:estrisk}
  \ol{R}_L(\theta) := \inf_\eta \sup_{\ol{h}\in\ol{\mc{H}_L}} \frac{1}{\alpha}\EE_{x,y,\ol{c}}[ \ol{h}(x,\ol{c}) \ell(x,y) - \eta] + \eta.
\end{equation}

We can define the upper bound for the Lipschitz case,
\begin{prop}\label{prop:errbound}
  Let $f: \mc{C} \to \ol{\mc{C}}$ define $\hat{h}(x,c) := \ol{h}(x,f(c))$ such that $\frac{1}{K_f} \hat{h} \in \mc{H}_L$ for all $\ol{h} \in \ol{\mc{H}_L}$. Then,
  \[  \ol{R}_L(\theta) \leq K_f R_L(\theta) + \frac{LM \EE_{xy} W_1(f(c|xy), \ol{c}|xy)}{\alpha}\]
  where $f(c|xy)$ is the pushforward measure of $c|xy$ under $f$.  
\end{prop}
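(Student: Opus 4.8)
The plan is to compare $\ol R_L(\theta)$ and $R_L(\theta)$ competitor-by-competitor: transport any admissible smoothing function $\ol h$ over the elicited space $\ol{\mc C}$ to an admissible one over the true space $\mc C$ using the alignment map $f$, and absorb the resulting distributional mismatch into a Wasserstein-$1$ term via the dual (Kantorovich--Rubinstein) representation of $W_1$. Fix $\theta$, abbreviate $\ell=\ell(x,y)$, and write $\ol R_L(\theta)=\inf_\eta[\ol G(\eta)+\eta]$ and $R_L(\theta)=\inf_\eta[G(\eta)+\eta]$ with $\ol G(\eta):=\sup_{\ol h\in\ol{\mc{H}_L}}\tfrac1\alpha\EE_{x,y,\ol c}[\ol h(x,\ol c)(\ell-\eta)]$ and $G(\eta):=\sup_{h\in\mc{H}_L}\tfrac1\alpha\EE_{x,y,c}[h(x,c)(\ell-\eta)]$. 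Since the loss lies in $[0,M]$, the dual variable $\eta$ may be restricted to $[0,M]$ in both infima, and $K_f\ge1$ without loss of generality. It then suffices to prove the pointwise-in-$\eta$ inequality
\[
\ol G(\eta)\ \le\ K_f\,G(\eta)\ +\ \frac{LM}{\alpha}\,\EE_{xy}\,W_1\big(f(c|xy),\,\ol c|xy\big)\qquad(\eta\in[0,M]),
\]
because adding $\eta$, taking $\inf_{\eta\in[0,M]}$ of both sides, and using $K_fG(\eta)+\eta\le K_f\big(G(\eta)+\eta\big)$ for $\eta\ge0$ then turns the right-hand infimum into $K_fR_L(\theta)$.

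To prove the pointwise inequality, fix $\ol h\in\ol{\mc{H}_L}$ and define the pullback $\hat h(x,c):=\ol h(x,f(c))$, which by hypothesis satisfies $\tfrac1{K_f}\hat h\in\mc{H}_L$. Conditioning the inner $\ol c$-expectation on $(x,y)$ (on which $\ell-\eta$ is measurable) and adding and subtracting $\EE_{c|xy}[\ol h(x,f(c))]$,
\[
\EE_{x,y,\ol c}[\ol h(x,\ol c)(\ell-\eta)]=\EE_{x,y,c}[\hat h(x,c)(\ell-\eta)]+\EE_{xy}\!\Big[(\ell-\eta)\big(\EE_{\ol c|xy}[\ol h(x,\ol c)]-\EE_{c|xy}[\ol h(x,f(c))]\big)\Big].
\]
For the first summand, $\EE_{x,y,c}[\hat h(x,c)(\ell-\eta)]=K_f\,\EE_{x,y,c}[\tfrac1{K_f}\hat h(x,c)(\ell-\eta)]\le K_f\,\alpha\,G(\eta)$ since $\tfrac1{K_f}\hat h\in\mc{H}_L$. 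For the second, $\EE_{\ol c|xy}[\ol h(x,\ol c)]-\EE_{c|xy}[\ol h(x,f(c))]=\int\ol h(x,\cdot)\,d\big(\ol c|xy-f(c|xy)\big)$; since $\ol h(x,\cdot)$ is $L$-Lipschitz, this is at most $L\,W_1(f(c|xy),\ol c|xy)$ in absolute value by Kantorovich--Rubinstein, and combined with $|\ell-\eta|\le M$ the second summand is at most $LM\,\EE_{xy}W_1(f(c|xy),\ol c|xy)$. Dividing by $\alpha$ and taking the supremum over $\ol h\in\ol{\mc{H}_L}$ gives the pointwise bound, hence the proposition.

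The step I expect to be the main obstacle is the second summand above. Because $\mc C$ and $\ol{\mc C}$ need not be the same metric space, there is no direct comparison between an $h\in\mc{H}_L$ and an $\ol h\in\ol{\mc{H}_L}$, so the comparison must be routed through the pullback $\hat h=\ol h\circ f$; the hypothesis $\tfrac1{K_f}\hat h\in\mc{H}_L$ is exactly what licenses this, at the price of the distortion factor $K_f$, while the residual discrepancy --- between the law $\ol c|xy$ and the push-forward $f(c|xy)$ --- is controlled in $W_1$ precisely because each competitor $\ol h(x,\cdot)$ is $L$-Lipschitz; this is also why the constants $M$ (boundedness of the loss) and $1/\alpha$ appear exactly as in the statement. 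The remaining $\eta$- and $K_f$-bookkeeping in the reduction is routine.
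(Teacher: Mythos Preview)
Your proposal is correct and follows essentially the same route as the paper: pull back $\ol h$ through $f$ to get $\hat h$, use the hypothesis $\tfrac{1}{K_f}\hat h\in\mc{H}_L$ to bound the main term by $K_f$ times the true-$c$ objective, and control the residual conditional-mean difference by Kantorovich--Rubinstein using that $\ol h(x,\cdot)$ is $L$-Lipschitz together with $|\ell-\eta|\le M$. The only difference is organizational: the paper fixes the maximizer $\ol h^*$ and carries the $\inf_\eta$ through the chain of inequalities, whereas you prove a pointwise-in-$\eta$ bound $\ol G(\eta)\le K_fG(\eta)+\text{const}$ and then argue $\inf_\eta[K_fG(\eta)+\eta]\le K_f\inf_\eta[G(\eta)+\eta]$ via $K_f\ge1$ and $\eta\ge0$; your handling of how $K_f$ passes through the infimum is in fact more explicit than the paper's.
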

\begin{proof}
  Let $\ol{h}^*$ be the $\ol{h}\in\ol{\mc{H}_L}$ which is the maximizer to Eq~\eqref{eq:estrisk}. For convenience define
  \begin{align*}
    \Delta_{xy}^f &:= E_{c|xy}[\hat{h}^*(x,c)] - E_{\ol{c}|xy}[\ol{h}^*(x,\ol{c})]\\
    &= E_{\ol{c}\sim f(c|xy)}[\ol{h}^*(x,\ol{c})] - E_{\ol{c}|xy}[\ol{h}^*(x,\ol{c})]
  \end{align*}
  The equality follows the change of variables property of pushforward measures.
Now rewriting the risk measure in terms of $\Delta$,
  \begin{align*}
    \ol{R}_L(\theta) &= \inf_\eta\frac{1}{\alpha} \EE_{xy}\left[\left(\E_{c|xy}[\hat{h}^*(x,c)]-\Delta_{xy}^f\right)\ell(x,y) - \eta\right]+\eta\\
                            &\leq \inf_\eta\frac{1}{\alpha} \EE_{xy} \left[E_{c|xy}[\hat{h}^*(x,c)]\ell(x,y)-\eta\right]+\eta \\
    &\qquad + \frac{E_{xy}[\left| \Delta_{xy}^f \right|] M}{\alpha}\\
                    &\leq \inf_\eta K_f \sup_{h\in \mc{H}_L} \frac{1}{\alpha} \EE_{xy} \left[\EE_{c|xy}[h(x,c)]\ell(x,y) - \eta\right]+\eta \\ &\qquad +\frac{E_{xy}[\left| \Delta_{xy}^f \right|] M}{\alpha}\\
                            &= K_fR_L(\theta) + \frac{E_{xy}[\left| \Delta_{xy}^f \right|] M}{\alpha}\\
                            &\leq K_fR_L(\theta) + \frac{L M W_1(f(c|xy), \ol{c}|xy)}{\alpha}
  \end{align*}
  First inequality follows from H\"older's inequality, and the fact that $0\leq \ell(x,y)\leq M$. The second one follows from the assertion that $\frac{1}{K_f} \hat{h} \in \mc{H}_L$, and the last inequality follows from the fact that $\ol{h}$ is $L$-Lipschitz, and utilizing the pushforward measure form of $\Delta$. 
\end{proof}

An analogous argument shows the other side of this bound given by,
\[  R_L(\theta) \leq K_g \ol{R}_L(\theta) + \frac{LM \EE_{XY} W_1(c|xy, g(\ol{c}|xy))}{\alpha}.\]
This shows that our DRO estimator achieves multiplicative error scaling with $K_f, K_g$ and additive error scaling with the Wasserstein distance between the true and the estimated unmeasured variables. 

Our assumptions on $K_f$ and $K_g$ are easily fulfilled in the case where there is a single bi-Lipschitz bijection $f: \mc{C}\to \ol{\mc{C}}$. In this case, $g=f^{-1}$ and $K_f = K_g = K$. 

We can interpret this bound as capturing two sources of error: our metric can be inappropriate and our estimates of $\ol{C}$ can be inherently noisy. For the first term, note that a map with higher metric distortion (e.g. bi-Lipschitz maps with large constants) results in a looser bound. This is because the Lipshcitz function assumption in the original space $\mc{C}$ does not correspond closely to Lipschitz functions in $\ol{\mc{C}}$.

For the second term, we incur error whenever $W_1(c|xy, g(\ol{c}|xy))$ is large. The alignment map $g$ takes our elicited unmeasured variables and approximates the true ones. However, if $\ol{c}$ does not contain enough information to reconstruct $c$ then no function $g$ can exactly map $\ol{c}$ to $c$, and we incur an approximation error that scales as the transport distance between the two.

We can now provide a simple lemma that bounds the quality of the model estimate under the approximation $\ol{c}$ compared to the minimizer of the exact unmeasured variables $c$.

For convenience we will use the following shorthand for the additive error terms,
\begin{align*}
  A_f &= \frac{LM \EE_{XY} W_1(\ol{c}|xy, f(c|xy))}{\alpha}\\
  A_g &=\frac{LM \EE_{XY} W_1(c|xy, g(\ol{c}|xy))}{\alpha}.
  \end{align*}

\begin{cor}
  Let $\ol{\theta}^* := \arg\min_\theta \ol{R}_L(\theta)$, then
  \begin{align*}
    &R_L(\ol{\theta}^*) - \inf_\theta R_L(\theta) \\
    &\leq \inf_\theta R_L(\theta)\left(K_fK_g-1\right)+K_gA_f + A_g \\
    \end{align*}
  \end{cor}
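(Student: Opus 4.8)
The corollary is a direct consequence of the two one-sided distortion bounds already established: Proposition~\ref{prop:errbound}, which gives $\ol{R}_L(\theta) \leq K_f R_L(\theta) + A_f$ for every $\theta$, and its mirror image $R_L(\theta) \leq K_g \ol{R}_L(\theta) + A_g$ for every $\theta$. The plan is to chain these two inequalities around the optimality of $\ol{\theta}^*$ for the approximate risk. Concretely, first I would fix a minimizer $\theta^\star \in \arg\min_\theta R_L(\theta)$ (or, if the infimum is not attained, an arbitrary $\varepsilon$-minimizer and let $\varepsilon \to 0$ at the end).

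The core of the argument is a three-step sandwich. Starting from $R_L(\ol{\theta}^*)$, apply the reverse bound at $\theta = \ol{\theta}^*$ to get $R_L(\ol{\theta}^*) \leq K_g \ol{R}_L(\ol{\theta}^*) + A_g$. Next, use that $\ol{\theta}^*$ minimizes $\ol{R}_L$, so $\ol{R}_L(\ol{\theta}^*) \leq \ol{R}_L(\theta^\star)$; since $K_g \geq 0$ this yields $R_L(\ol{\theta}^*) \leq K_g \ol{R}_L(\theta^\star) + A_g$. Finally, apply Proposition~\ref{prop:errbound} at $\theta = \theta^\star$ to bound $\ol{R}_L(\theta^\star) \leq K_f R_L(\theta^\star) + A_f$, and substitute:
\[
  R_L(\ol{\theta}^*) \;\leq\; K_f K_g R_L(\theta^\star) + K_g A_f + A_g.
\]
Subtracting $R_L(\theta^\star) = \inf_\theta R_L(\theta)$ from both sides and collecting the $R_L(\theta^\star)$ terms gives exactly $R_L(\ol{\theta}^*) - \inf_\theta R_L(\theta) \leq (K_f K_g - 1)\inf_\theta R_L(\theta) + K_g A_f + A_g$, as claimed.

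There is essentially no hard step here; the only things to be careful about are that $K_f, K_g \geq 0$ so that multiplying the inequalities preserves their direction, and the (minor) point of whether $\arg\min_\theta R_L(\theta)$ is attained — which is handled by passing to $\varepsilon$-optimal $\theta^\star$ and taking $\varepsilon \to 0$, since the final bound depends on $\theta^\star$ only through $R_L(\theta^\star)$. The conceptual content was already spent in proving Proposition~\ref{prop:errbound} and its analogue; this corollary merely packages them into a statement about the suboptimality of the model learned from the approximate unmeasured variables.
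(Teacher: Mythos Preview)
Your proposal is correct and matches the paper's own proof essentially line for line: the paper also applies the reverse bound at $\ol{\theta}^*$, uses the optimality of $\ol{\theta}^*$ for $\ol{R}_L$, applies Proposition~\ref{prop:errbound} at the true minimizer, and combines to obtain $R_L(\ol{\theta}^*) \leq K_fK_g \inf_\theta R_L(\theta) + K_gA_f + A_g$. Your additional care about $K_g \geq 0$ and the $\varepsilon$-minimizer argument are minor refinements the paper omits, but the approach is the same.
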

  \begin{proof}
    By Proposition~\ref{prop:errbound}, we have both
    \begin{align*}
      \inf_\theta \ol{R}_L(\theta) &\leq \inf_\theta K_f R_L(\theta) + A_f\\
      R_L(\ol{\theta}^*) &\leq K_g \ol{R}_L(\ol{\theta}^*) + A_g.
    \end{align*}
    By definition of $\ol{\theta}^*$ as the minimizer of $\ol{R}_L$, we obtain
    \[R_L(\ol{\theta}^*) \leq K_fK_g \inf_\theta R_L(\theta) + K_gA_f + A_g\]
    which gives the stated result.
  \end{proof}

  The corollary shows that the best model under the estimated unmeasured variables $\ol{c}$ performs well under the true DRO risk measure $R_L$ as long as $K_fK_g\approx 1$ and $A_f,A_g$ are small. There are two sources of error: the metric distortion results in a relative error that scales as $K_fK_g$, and the noise in estimation $(A_f,A_g)$ results in additive error. The $K_gA_f$ scaling term arises from the fact that error is measured with respect to the metric over $c$, not over $\ol{c}$. 

Importantly, these bounds show that we need not directly estimate the true unmeasured variables $c$ using $\ol{c}$ - our estimated unmeasured variables can live in an entirely different space, and as long as there \emph{exists} some low-distortion alignment functions $f,g$ that align the two spaces, the implied risk functions are similar.

\section{Effect of Crowdsourcing Quality}
\begin{figure}[h]
\begin{subfigure}{.5\textwidth}
  \centering
  \includegraphics[scale=0.275]{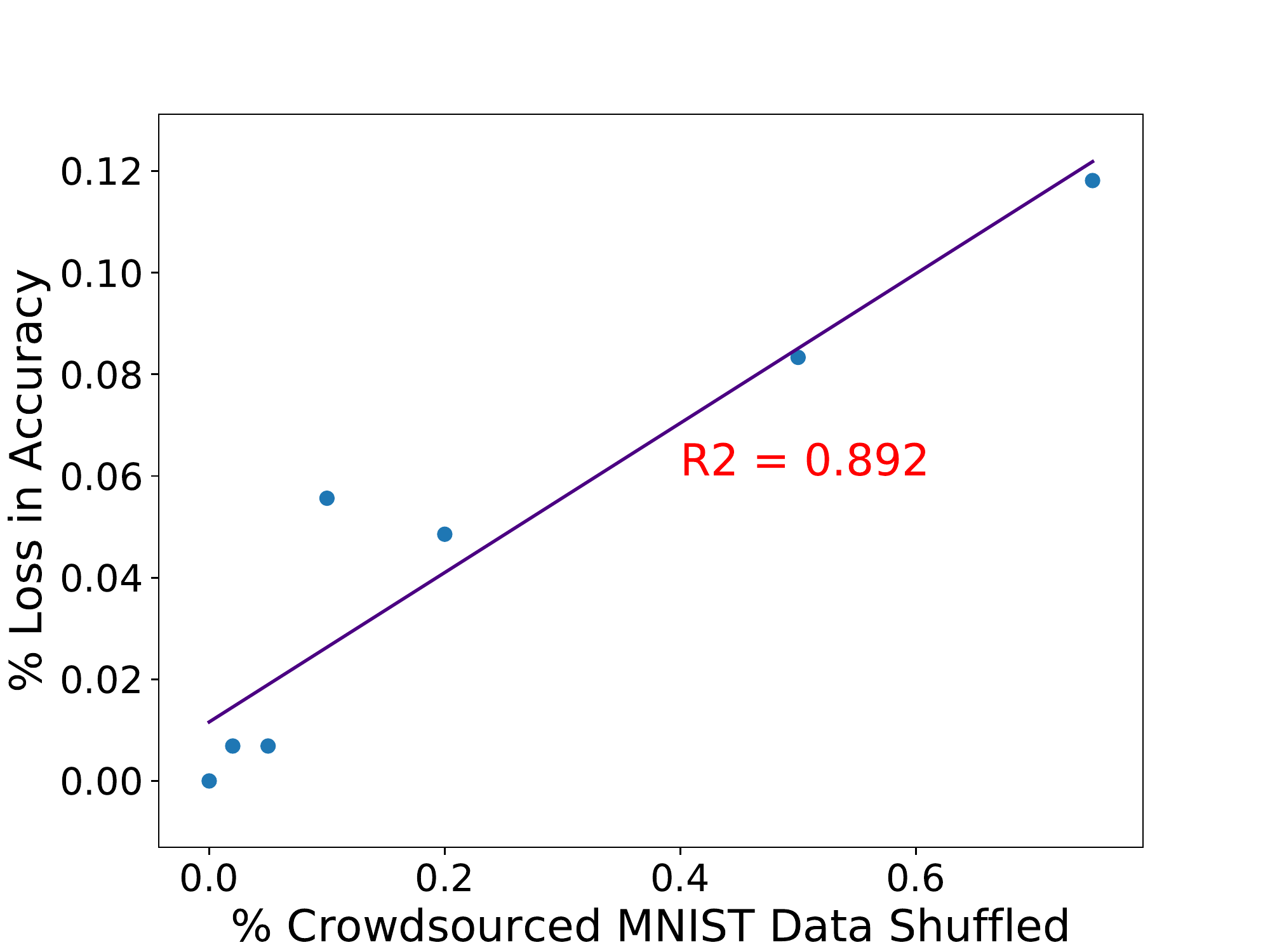}
  \end{subfigure}%
  \begin{subfigure}{.25\textwidth}
  \centering
  \includegraphics[scale=0.275]{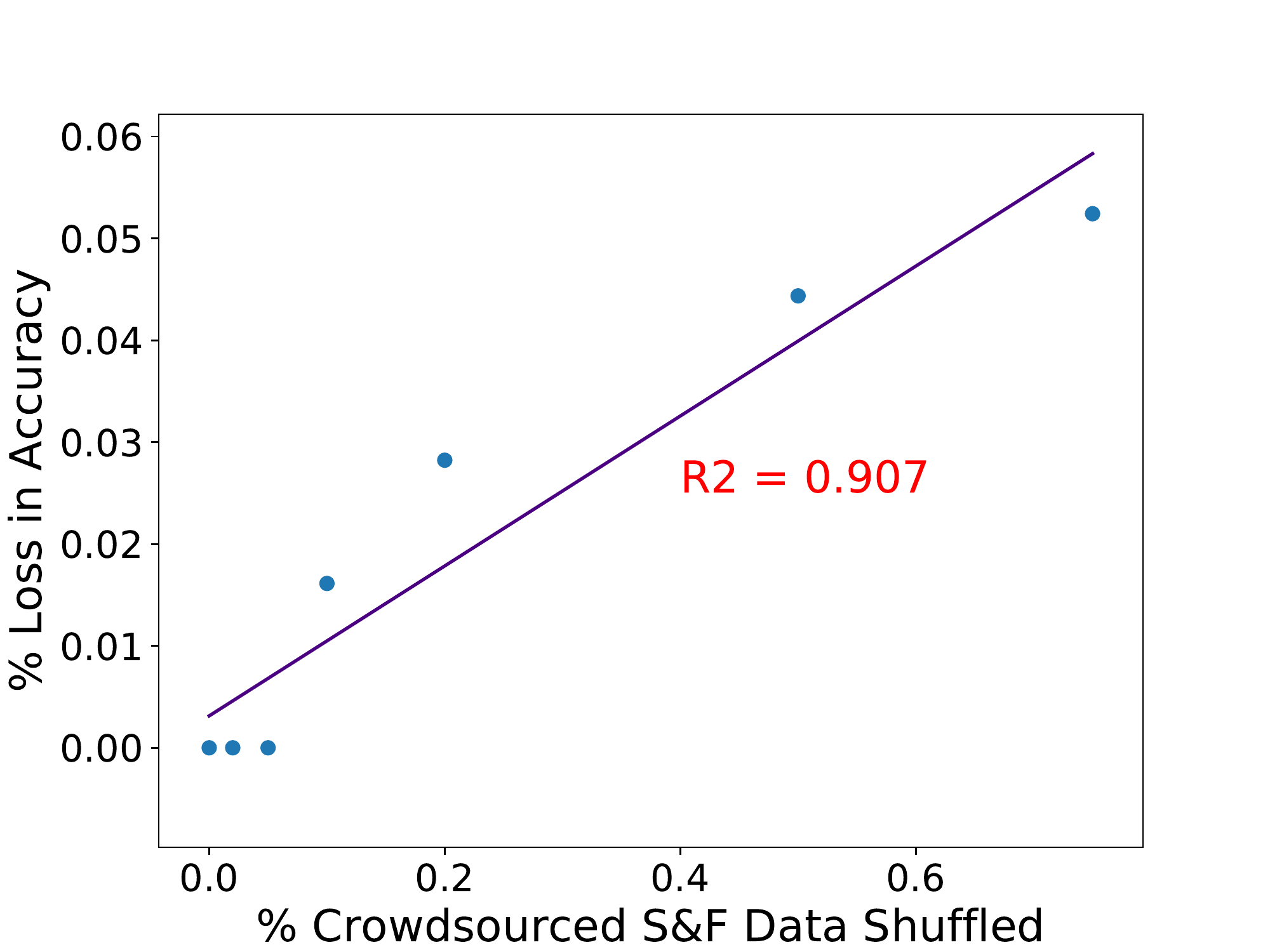}
  \end{subfigure}
  \caption{Decreasing crowdsourcing quality by randomly shuffling results in a highly correlated decrease in accuracy over both MNIST (left) and stop-and-frisk(right) datasets.}\label{fig:shuffle}
  \end{figure}
We empirically evaluate the role of crowdsourcing data quality on UV-DRO performance to complement our theoretical bound in Section 4. We previously showed a significant performance gap when we shuffle 100\% of the crowdsourced unmeasured variables, causing random associations that impact the crowdsourcing quality. We further investigate this gap by shuffling [0, 2, 5, 10, 20, 50, 75]\% of the crowdsourced unmeasured variables, and find a highly correlated accuracy drop for both MNIST ($R^2 = .89$) and stop-and-frisk datasets ($R^2 = .91$), as seen in Figure ~\ref{fig:shuffle}. This demonstrates a linear relationship between crowdsourcing quality and robust performance. 

\section{Annotation Unigrams Analysis Table}
\begin{table}[h]
\caption{Exploratory analysis on the annotations collected over stop-and-frisk data by training a logistic regression model to predict location from a selection of annotation unigrams.}
\label{sf-table}
\vskip 0.15in
\begin{center}
\begin{small}
\begin{sc}
\begin{tabular}{lcccr}
\toprule
\textbf{Brooklyn}&&\textbf{Manhattan}\\
\midrule
\textbf{Unigram}&\textbf{Weight}&\textbf{Unigram}&\textbf{Weight} \\
Discrimination& -1.22& Weapon& 0.82 \\
Racist & -0.29& Gun & 0.21\\
Racial    & -0.19& Armed & 0.89 \\
Homeless    & -0.84& Drug & 0.43        \\
Unrelated     &-1.68& Gang & 1.03\\
Cleared      &-0.98& Dangerous & 0.79 \\
Evidence      &-0.12& Witness & 0.81\\
\bottomrule
\end{tabular}
\end{sc}
\end{small}
\end{center}
\vskip -0.1in
\end{table}

\section{Reproducibility \& Experiment Details}
All experiments and data described below are available on CodaLab:
 \url{https://bit.ly/uvdro-codalab}.  

\subsection{Simulated Medical Diagnosis Task}
We simulate our data (n=1,000) using the following generation procedure:
\begin{enumerate}
\item $q_{train}$ = {.05, .1, .2, .3, .4, .5, .6, .7, .8} and $q_{test}$ = 0.8. 
\item $\conf$ is sampled from the $c \sim 1 - 2$ Bernoulli$(q)$. 
\item $y$ is sampled from $y \sim \mathcal{N}(0, 2)$, independent from from train or test.
\item For each $(c, y)$ sample, set $x_1 = \conf * y$ and $x_2 = y + \epsilon$ where $\epsilon \sim \mathcal{N}(0,4)$. 
\end{enumerate}  

For both ERM and UV-DRO, we trained a linear regression model over $p(y|x_1, x_2)$, optimized using batch gradient descent over 3k steps with AdaGrad with an optimal learning rate of $.0001$. We set UV-DRO parameter $\alpha = 0.2$, and tune $\eta$ via grid-search for each $q_{train}$ value. We present results (Mean Squared Error) on the same held-out test set for all models.

\subsection{MNIST Digit Classification with Confounding Transformations}
We use the popular MNIST dataset (\url{http://yann.lecun.com/exdb/mnist/}). We train on only a subset (n=4000) of the training data due to the cost of collecting annotations, and tune parameters on a separate validation set. For all data points, we treat the pixels of a (possibly transformed) image as the features $x$, the fact of whether a transformation occurred as the unmeasured variable $c$, and the MNIST digit as label $y$. We simulate a shift in an unmeasured rotation confounding variable using the following procedure:

\begin{enumerate}
\item $q_{train}$ = {.05, .1, .2, .4, .6} and $q_{test}$ = 1.0. 
\item $\conf$ is sampled from the $c \sim $ Bernoulli$(q)$, where $\conf = 1$ means the image was rotated. 
\item For each $(x,y)$ pair in the dataset, we rotate the original MNIST image $x$ by 180 degrees if $\conf = 1$.

\end{enumerate} 

For all ERM, DRO, and UV-DRO models, we trained a logistic regression model, optimized with batch gradient descent using AdaGrad and an optimal learning rate of $.001$. The optimal $l2$ penalty found for ERM models was 25. Optimal UV-DRO parameters (tuned on 20\% of data as valid) include $l2$ penalty of 50, a Lipschitz constant $L$ of 1, $\alpha = 0.2$, and we explicitly solve for the minimizer of $\eta$ with regards to the empirical distribution at each gradient step. We present results (Log-Loss, Accuracy) on the same held-out test set for all models.   

\subsection{Police Stop Analysis with Confounding Locations}
We use a dataset of NYPD police stops (\url{https://www.nyclu.org/en/stop-and-frisk-data}). We train on only a subset (n=2000) of the training data due to the cost of collecting annotations, and tune parameters on a separate validation set. For all data points, we filter out all variables except for 26 police stop observation as features $x$ (i.e. "in a high crime area"), the NYC borough as the unmeasured location variable $c$, and the label for arrest $y$. We simulate a shift in the location variable ($c$) using the following procedure:

\begin{enumerate}
\item $q_{train}$ = {0.1, 0.2, 0.3, 0.4, 0.5, 0.6} and $q_{test}$ = 1.0. 
\item $\conf$ is sampled from the $c \sim $ Bernoulli$(q)$, where $\conf = 1$ means the location is Brooklyn. 
\item We build the dataset by drawing from the entire dataset a $(x,y,c=\conf')$ example for each $\conf'$ sampled. 
\end{enumerate} 

For all ERM, DRO, and UV-DRO models, we trained a logistic regression model optimized with batch gradient descent using AdaGrad and an optimal learning rate of $.005$. The optimal $l2$ penalty found for ERM models was 0. Optimal UV-DRO (tuned on 20\% of data as valid) parameters include $l2$ penalty of 50, a Lipschitz constant $L$ of 1, $\alpha = 0.2$, and we explicitly solve for the minimizer of $\eta$ with regards to the empirical distribution at each gradient step. We present results (Log-Loss, Accuracy) on the same held-out test set for all models.    

\end{document}